\newcommand{\bX}{\mathbf{X}}
\newcommand{\bx}{\mathbf{x}}
\newcommand{\bY}{\mathbf{Y}}
\newcommand{\by}{\mathbf{y}}
\newcommand{\bZ}{\mathbf{Z}}
\newcommand{\bv}{\mathbf{v}}
\newcommand{\bw}{\mathbf{w}}
\newcommand{\bu}{\mathbf{u}}
\newcommand{\bM}{\mathbf{M}}
\newcommand{\bm}{\mathbf{m}}
\newcommand{\boldm}{\mathbf{m}}
\newcommand{\ind}{\perp\!\!\!\!\perp} 
\newcommand{\Xtm}{\bX^{(t-1)}}
\newcommand{\E}{\mathbb{E}}
\newcommand{\Xtilde}{\tilde{\bX}}
\newcommand{\bg}{\mathbf{g}}
\newcommand{\PXgM}{\mathbb{P}_{\bX^{(t)}(\bg),\bM}}
\newcommand{\PXMt}{\mathbb{P}_{\bX, \bM}^{(t)}}
\newcommand{\PXPMt}{\mathbb{P}_{\bX}^{(t)} \otimes \mathbb{P}_{\bM}}
\newcommand{\PXMtm}{\mathbb{P}_{\bX^{(t-1)}, \bM}}
\newcommand{\PXPMtm}{\mathbb{P}_{\bX^{(t-1)}} \otimes \mathbb{P}_{\bM}}
\newcommand{\KLt}{\mathrm{D}\left[\PXMt \Vert \PXPMt\right]}
\newcommand{\KLttm}{\mathrm{D}\left[\PXMt \Vert \PXPMtm\right]}
\newcommand{\KLgtm}{\mathrm{D}\left[\PXgM \Vert \PXPMtm\right]}
\newcommand{\KLtm}{\mathrm{D}\left[\PXMtm \Vert \PXPMtm\right]}
\newcommand{\Prob}{\mathbb{P}}
\newtheorem{theorem}{Theorem}
\newtheorem{proposition}{Proposition}
\newtheorem{lemma}{Lemma}
\newtheorem{definition}{Definition}
\definecolor{darkgreen}{rgb}{0.0, 0.5, 0.0}
\newcommand{\red}[1]{\textcolor{red}{#1}}
\newcommand{\green}[1]{\textcolor{darkgreen}{#1}}
\newtcolorbox{greybox}{
    colback=black!5!white,      % Background color: 5% black, 95% white (light grey)
    colframe=black!5!white,    % Frame color: 5% black 
    sharp corners               % Use sharp corners instead of rounded ones
}
\newcommand{\eqd}{\mathrel{\smash{\overset{\text{d}}{=}}}}
\title{Missing Data Imputation by Reducing Mutual Information with Rectified Flows}
\begin{document}

% For the final version, authors' names are set in boldface, and each name is centered above the corresponding address. The lead author's name is to be listed first (left-most), and the co-authors' names (if different address) are set to follow. If there is only one co-author, list both author and co-author side by side.

% \author{
%   Jiahao Yu\textsuperscript{1}\hypersetup{linkcolor=black}\footnotemark[1], 
%   Qizhen Ying\textsuperscript{2}\footnotemark[1], 
%   Leyang Wang\textsuperscript{3}\footnotemark[1], 
%   Ziyue Jiang \textsuperscript{4}, 
%   Song Liu\textsuperscript{4}\footnotemark[2] \\
%   \textit{\textsuperscript{1}Department of Engineering, University of Cambridge} \\
%   \textit{\textsuperscript{2}Department of Engineering Science, University of Oxford} \\
%   \textit{\textsuperscript{3}Department of Computer Science, University College London} \\
%   \textit{\textsuperscript{4}School of Mathematics, University of Bristol}
% }

\author{
  \begin{tabular}{c} % Main container to center the rows
    % --- First Row (3 authors) ---
    \begin{tabular}[t]{@{}c@{}}
      \textbf{Jiahao Yu}\footnotemark[1] \\
      % \textnormal{Department of Engineering} \\
      \textnormal{University of Cambridge}
    \end{tabular}
    \qquad % Horizontal space
    \begin{tabular}[t]{@{}c@{}}
      \textbf{Qizhen Ying}\footnotemark[2] \\
      % \textnormal{Department of Engineering Science} \\
      \textnormal{University of Oxford}
    \end{tabular}
    \qquad % Horizontal space
    \begin{tabular}[t]{@{}c@{}}
      \textbf{Leyang Wang}\footnotemark[2] \\
      % \textnormal{Department of Computer Science} \\
      \textnormal{University College London}
    \end{tabular}
    \\[3ex] % Vertical space between the two rows
    % --- Second Row (2 authors) ---
    \begin{tabular}[t]{@{}c@{}}
      \textbf{Ziyue Jiang} \\
      % \textnormal{School of Mathematics} \\
      \textnormal{University of Bristol}
    \end{tabular}
    \qquad\qquad % Wider horizontal space
    \begin{tabular}[t]{@{}c@{}}
      \textbf{Song Liu}\footnotemark[3] \\
      % \textnormal{School of Mathematics} \\
      \textnormal{University of Bristol}
    \end{tabular}
  \end{tabular}
}

\footnotetext[1]{Work completed at the University of Bristol.}
\footnotetext[2]{Work partially completed at the University of Bristol.}
\footnotetext[3]{Correspondence to Song Liu \textless song.liu@bristol.ac.uk\textgreater.}

\maketitle

\begin{abstract}
This paper introduces a novel iterative method for missing data imputation that sequentially reduces the mutual information between data and the corresponding missingness mask. Inspired by GAN-based approaches that train generators to decrease the predictability of missingness patterns, our method explicitly targets this reduction in mutual information.
Specifically, our algorithm iteratively minimizes the KL divergence between the joint distribution of the imputed data and missingness mask, and the product of their marginals from the previous iteration. 
We show that the optimal imputation under this framework can be achieved by solving an ODE whose velocity field minimizes a rectified flow training objective. 
We further illustrate that some existing imputation techniques can be interpreted as approximate special cases of our mutual-information-reducing framework. Comprehensive experiments on synthetic and real-world datasets validate the efficacy of our proposed approach, demonstrating its superior imputation performance. Our implementation is available at \url{https://github.com/yujhml/MIRI-Imputation}.
\end{abstract}

% ==============================================================
\section{Introduction}
% Many missing data imputation method, various criteria, we minimize mutual information. 

% Mention that we recover distribution while others may not. 
The problem of missing data, referring to absent components (\texttt{NaNs}) in a dataset, can arise in a wide range of scenarios \citep{rubin1976inference, little2019statistical}.
For example, in a survey dataset, respondents might skip some questions accidentally or intentionally, resulting in missing values. In fMRI data, there can be missing voxels due to incomplete brain coverage and spatial variations in images acquired across subjects \citep{vaden2012multiple}. 
% Missing data imputation involves filling in the missing components of a partially observed dataset in a consistent manner.
% There are different types of missing data problems depending on the pattern of the missing components, or ``missingness'' \citep{rubin1976inference}. Missing Completely At Random (MCAR) means the missingness does not depend on the data, Missing Not At Random (MNAR) means the missingness can depend on the data. In this paper, we focus on the MCAR setting. 
% In our paper, the missingness of a data vector is expressed through a ``missingness mask'', a random binary vector of the same length. 
Simple ``one-shot'' methods 
replace missing entries with summary statistics, such as the sample mean or median of the observed data. In contrast, approaches like MICE \citep{vanbuuren2011mice}, MissForest \citep{stekhoven2012missforest}, and HyperImpute \citep{Jarrett2022HyperImpute} employ an iterative, dimension-wise scheme. In each iteration, they model one feature conditional on all others and sample from that model to replace the feature’s missing entries, cycling through all features repeatedly until the imputations stabilize.
% At each iteration, they estimate the conditional distribution of a single variable given the others, and then sample from this distribution to impute the missing values for that variable. This process continues iteratively until convergence. 
Because their updates are sequential and dimension-wise, these methods are difficult to parallelize and scale poorly to high‑dimensional data such as images \citep{brini_missing_2024}. Their accuracy is also highly sensitive to the choice of per‑variable models \citep{vanbuuren2011mice,laqueur_supermice_2022}.
% These methods are sequential in nature and dimension-wise, which makes them difficult to parallelize and limits their scalability to high-dimensional datasets (such as images). Moreover, their performance is highly sensitive to model selection choices \citep{vanbuuren2011mice,laqueur_supermice_2022}
% impute missing components by regressing the missing components one at a time in  fashion.

% At each iteration, they model one feature conditional on all other features and sample from this conditional distribution to impute the feature’s missing entries.

In recent years, research on generative modeling has made huge progress. Since imputation is naturally a data-generating task, generative models have been increasingly applied to impute missing data.
% Methods like Generative Adversarial Net (GAN) \citep{Goodfellow2014} and Normalizing Flow \citep{rezende2015variational,dinh_nice_2015} have demonstrated their capability to generate high-quality, high-dimensional samples. 
% One popular missing data imputation scheme 
\citep{Richardson_2020} proposes to maximize the likelihood of the imputed data using an EM algorithm, where the likelihood function is modeled using a normalizing flow \citep{rezende2015variational,dinh_nice_2015}. 
GAIN \citep{yoon2018gain}
leverages Generative Adversarial Nets (GANs) \citep{Goodfellow2014} to impute missing components. It trains a generator (a neural network) so that a binary classifier cannot distinguish whether a given component is imputed or not. 
% Some recent work suggests that GAIN's criterion encourages the imputed dataset to be independent of the missingness pattern \citep{liu24minimizing}. 
% \citep{li2018learning} proposes MisGAN to train a generator so that a discriminator cannot differentiate between generated data and observed data through missingness masks.
However, both GANs and normalizing flows come with their own challenges. GANs rely on unstable adversarial training and may suffer from mode collapse \citep{li_limitations_2018,zhang_convergence_2018}, while normalizing flows require specially designed architectures to ensure their invertibility, limiting their applications. 

Currently, diffusion models and \textit{flow-based methods} represent the state of the art in generative modeling techniques \citep{song2021scorebased,lipman2023flow,liu2023flow}.
They first train a velocity field according to some criteria. Then, they sample from the target distribution by solving an ODE or SDE using the trained velocity field as the drift.  
% More technically, flow-based approaches learn distribution matching ODEs, whose velocity fields transport samples from the reference distribution to the target distribution. 
% Moreover, these paths introduce ``couplings'' between samples from the reference and the target distribution that could be used to establish a transportation cost. For example, 
% Depending on the velocity field training objectives, the flow-based methods fall into several categories. 
% For example, fitting the velocity field to the interpolation direction between the reference and the target distribution gives rise to rectified flow \citep{liu2023flow}. 
% Fitting the velocity field to random perturbations gives rise to score-based generative models \citep{song2019generative,song2021scorebased}. 
% Despite the successes of flow-based approaches, few works have attempted to leverage the power of flow-based generative modeling for missing data imputation. 
% transporting distributions along straight lines gives rise to rectified flow. 
% \red{There are a few reasons it is difficult to adapt flow-based approaches}
The main challenge to adapt these methods for missing value imputation is that most flow-based methods are designed to transport reference samples to match the target distribution. However, it is unclear how to formulate the missing data imputation problem as a distribution transport problem. 
Nonetheless, progress has been made in this area. For example, MissDiff \citep{ouyang_missdiff_2023} learns a score model from partially observed samples, then uses an inpainter \citep{Lugmayr_2022_CVPR} to impute missing values. 
% KnewImp frames  imputation as a negative‑entropy‑regularised Wasserstein gradient flow problem \citep{chen2024rethinking}.

% Authors in \citep{liu24minimizing} observe that, GAIN's criterion encourages the imputed dataset to be independent of the missingness pattern. 
% % However no rigorous justification was provided. 
% In this paper, we continue their path to explore this idea but under a more rigorous framework. 
% % In this paper, we draw inspiration from GAIN to impute missing data by reducing dependency between the imputed data and the missingness mask. 
% % Since the independence between the imputed data and the missingness mask implies that the imputed data is not predictive to the missingness mask. 
% \begin{itemize}
%     \item We introduce a novel imputation framework, named Mutual Information Reducing Iterations (MIRI), and prove that this iterative algorithm indeed reduces the mutual information between imputed data and their corresponding missingness mask when an optimal imputer is employed.
%     \item We prove that an optimal MIRI imputer can be derived by solving an ODE, where the velocity field is trained via a rectified flow objective, which paves the way for incorporating flow-based approach into imputation. 
%     \item We show that several existing imputation methods %—including GAIN, HyperImpute, and certain diffusion-based approaches—
%     can be viewed as approximate special cases of the MIRI framework.
%     \item We demonstrate that our proposed approach achieves promising empirical performance on both tabular and image datasets. 
% \end{itemize}

Recently, \citep{liu24minimizing} observes that GAIN's criterion encourages the imputed dataset to be independent of the missingness pattern. 
% However no rigorous justification was provided. 
In this paper, we further explore this idea within a more rigorous framework. Our contributions are summarized as follows:
% (1) We introduce a novel imputation framework, named Mutual Information Reducing Iterations (MIRI), and prove that this iterative algorithm reduces the mutual information between imputed data and their corresponding missingness mask, when an optimal imputer is employed; (2) We show that an optimal MIRI imputer can be obtained by solving an ordinary differential equation (ODE), where the velocity field is trained via a rectified flow objective, naturally integrating flow-based generative modeling into the imputation process; (3) We show that several existing imputation methods can be viewed as approximate special cases of the MIRI framework; (4) We demonstrate that our proposed approach achieves promising empirical performance on both tabular and image datasets. 
\begin{greybox}
    \begin{enumerate}[leftmargin=*]
        \item We introduce \textbf{Mutual Information Reducing Iterations (MIRI)}, a novel framework that provably reduces mutual information between imputed data and their missingness mask when using an optimal imputer;
        % \item We show that an optimal MIRI imputer can be obtained by solving an ordinary differential equation (ODE), where the velocity field is trained via a rectified flow objective, naturally integrating flow-based generative modeling into the imputation process;
        \item We show that an optimal MIRI imputer can be obtained by solving an ODE whose velocity field is trained by a rectified flow objective, naturally integrating generative modeling;
        \item We reveal that several existing imputation methods can be viewed as approximate special cases of the MIRI framework;
        \item We demonstrate that our proposed approach achieves promising empirical performance on both tabular and image datasets.
    \end{enumerate}
\end{greybox}

 % Furthermore,  Then, 
% we impute the missing components to match the conditional distribution of the imputed vector (given the NaN mask) and the marginal distribution of the imputed vector. 
% This process encourages the independence between the imputed vector and the missing NaN mask. 
% Finally, 
% We introduce a novel independence measure, \textit{the Transportation Independence Criterion (TIC)}. We prove that TIC decreases after each iteration when the rectified flow is used to learn distribution matching ODEs. 

% We prove that using rectified flow to learn distribution matching ODEs reduces the transportation independence criterion.

% One would wonder if we could apply flow-based approaches to missing data imputation tasks. However, 
% However, we cannot directly apply this technique to impute missing components: We do not have a complete dataset as the target distribution. Thus, it is unclear how to create these intermediate distributions. 

% One curious question is whether we could apply flow-based generative models to missing data imputation tasks.  

% Such methods solves an ODE or SDE with the initial condition 

% Among many branches, we mainly focus on the generative-modeling approach. 

% Such a task is regarded as an important data preprocessing approach. 

% widely exists in many applications. In some cases, obser
% imputation has been a classic task with many important applications (cite). 

% ==============================================================
\section{Background}

We now briefly review the missing data imputation problem, Generative Adversarial Imputation Nets (GAIN) \citep{yoon2018gain}, and Rectified Flow \citep{liu2023flow}, which form the foundation of our proposed method.

% Symbol testing: $\va, \rva, \mA, \rmA, \emA$. 

% Suppose $\rmX^*$ is an unobserved random matrix, where each row is an iid sample from a distribution $\mathbb{P}^*_\rvx$. $\rmM$ is a mask matrix, whose rows are iid samples from missingness distribution $\mathbb{P}_{\rvm}$. We can only observe the matrix with missing data, which is 
% \begin{align}
%     \emX_{i,j} := 
%     \begin{cases}
%         \emX^*_{i,j} & \emM_{i,j} = 1, \\
%         \mathrm{NaN} & \emM_{i,j} = 0.
%     \end{cases}
% \end{align}
% Or equivalently we write $\mX = M \odot \rmX^* + (1-M) \odot \mathrm{NaN}$. 

\textbf{Notations.\quad}We denote scalars with lowercase letters (e.g., $x, y$) and vectors with bold lowercase letters (e.g., $\bx, \by$). Random variables are denoted by uppercase letters (e.g., $X, Y$), and random vectors by bold uppercase letters (e.g., $\bX, \bY$). The superscript $t \in \{1, \dots, T\}$ denotes the iteration count of the algorithm. The subscripts $0, 1,$ and $\tau$ denote the continuous time index in an ODE. Given a fixed missingness mask $\bm$ and a vector $\bx$ of the same dimension, we write $\bx_{\bm}$ as the subvector of $\bx$ whose elements correspond to entries where $m_i = 1$, and $\bx_{1-\bm}$ as the subvector whose elements correspond to entries where $m_i = 0$. Following convention, we refer to $\bx_\bm$ as the ``non-missing components'' of $\bx$ and $\bx_{1-\bm}$ as the ``missing components'' of $\bx$. The notation $A \eqd B$ means that the random variables $A$ and $B$ are equal in distribution.

% \paragraph{Notations.} We denote scalars as lowercase letters (e.g., $x,y$), and vectors as bold lowercase letters (e.g., $\bx, \by$).
% Random variables are uppercase letters, e.g., $X, Y$. Random vectors are bold uppercase letters $\bX, \bY$. 
% % Given a random vector $\bM \in \{0,1\}^d$,
% The superscript $t \in \{1, \dots, T\}$ denotes the iteration count in the algorithm. The subscript $0, 1$ and $\tau$ denote the continuous time index in an ODE. 
% Given a fixed missingness mask $\bm$ and a vector $\bx$ with the same dimension, we write $\bx_{\bm}$ as the subvector of $\bx$ whose elements correspond to $m_i = 1$ and $\bx_{1-\bm}$ as the subvector of $\bx$ whose elements corresponds to $m_i = 0$. 
% Following the convention, we refer to $\bx_\bm$ as the ``non-missing components'' of $\bx$ and $\bx_{1-\bm}$ as the ``missing components'' of $\bx$. 
% $A\stackrel{d}{=}B$ means random variables $A$ and $B$ are equal in distribution. 

% $\mathrm{ODE}(\mathrm{init}= \bx_0, \mathrm{vec}= \bv, \tau = \tau_0)$ represents the solution to the ODE: $d\bx = \bv(\bx_\tau, \tau){d\tau}$ at $\tau = \tau_0$ with the initial condition $\bx_0$. 

% The target of this paper is to construct an imputation matrix $\widehat{\mX}$, that is ``similar'' to the original $\rmX^*$. 
% We only impute the missing elements of $\mX$, i.e., $\forall M_{i,j} = 1, \widehat{\emX}_{i,j} = \emX_{i,j}$. 

% In the sections below, we denote the join probability of $(\rvx, \rvm)$ as $\mathbb{P}_{\rvx,\rvm}$ and $\mathbb{P}_{\rvx}$ and the marginals as $\mathbb{P}_{\rvx}, \mathbb{P}_{\rvm}$. 

\subsection{Sequential Imputation}
Now, we formally define the missing data imputation problem. 
The true data vector $\bX^* = [X^*_1, \dots, X^*_d]$ is a random vector taking values on $\mathcal{X} \subset \mathbb{R}^d$, while the missingness mask $\bM = [M_1, \dots, M_d]$ is a random vector in $\{0,1\}^d$. 
In the \textit{Missing Completely at Random (MCAR)} \citep{rubin1976inference} setting, we assume that the true data is independent of the missingness mask, i.e., $\bX^* \ind \bM$ 
(See Section \ref{app:mar_proof} for discussions on Missing At Random (MAR) setting). 
% We assume $X^* \ind M$. 
The observed data vector $\tilde{\bX} = (\tilde{X}_1, \dots, \tilde{X}_d)$ 
is defined as 
$
    \tilde{X}_j := \begin{cases}
        X^*_j & M_j = 1\\
        \texttt{NaN} & M_j = 0 
    \end{cases}.
$
In words, the variable $j$ in the observation $\Xtilde$ is missing when $M_j = 0$ and is observed when $M_j = 1$. 
% Let $(\bx^*, \bm)$ be a paired sample of $(X^*, M)$. 
% If the $j$-th component of $M$ is 1, the $j$-th component of $X^*$ is observed. If the $j$-th component of $M$ is 0, the $j$-th component of $X^*$ is missing. 
% i.e., $m_j = 1$,
% We assume that samples of $X^*$ take values in $\mathbb{R}^d$ and the sample space of $M$ is $\{0,1\}^d$. 
Equivalently, 
\begin{align}
    \tilde{\bX} = \bM \odot \bX^* + (1- \bM) \odot \texttt{NaN},
\end{align}
where $\odot$ represents element-wise product. 
In this paper, we want to construct an imputation vector 
\begin{align}
\label{eq.impute}
  \bX(\bg) = \bM \odot \tilde{\bX} + (1-\bM) \odot \bg(\bZ_0; \tilde{\bX}, \bM),  
\end{align}
where $\bg: \mathbb{R}^d \times (\mathbb{R}^d \cup \verb|NaN|) \times \{0, 1\}^d \to \mathbb{R}^d$ is called an \emph{imputer}. $\bZ_0$ is a random seed of the imputer. Different random seeds $\bZ_0$ can lead to different imputations.  
% In the next section, we present an algorithm that computes $G$ in \eqref{eq.impute}. 

In many cases, imputation is an iterative process: Instead of computing $\bX$ in a single step, the imputation is improved through iterative updates. 
This sequential imputation involves the following procedure: for each iteration $t = 1, \dots, T$, we construct the imputation  
\begin{align}
    \label{eq.impute.seq}
    \bX^{(t)}(\bg_t) := \bM \odot \bX^{(t-1)}  + (1-\bM) \odot \bg_t(\bZ_0^{(t)}; \Xtm, \bM), 
\end{align}
where $\bX^{(t)}$ indicates the imputed data vector at iteration $t$. 
The initial vector, $\bX^{(0)}$ is initialized with any standard imputation technique (e.g., Gaussian noise, mean or median imputation). A schematic of this sequential imputation algorithm is provided in Figure \ref{fig.seq.impute} of Appendix \ref{sec:seq.imp}.

The central task of this iterative process is to learn the imputer $\bg_t$ at each iteration $t$.
% Given a chosen generator $\bg^{(t)}$ at iteartion $t$,
% the imputation vector at the $t$-th iteration is computed as $\bX^{(t)} \leftarrow \bX^{(t)}(\bg_t)$.
% \begin{align}
%     \Xt = (1- \bM) \odot \Xtilde  + \bM \odot \bg_t(\bZ_0; \Xtm)
% \end{align}
% where the imputation at the $t$-th iteration is computed from the results of the previous iteration. 
In the following sections, we show how a popular missing data imputation scheme can be formulated within this framework and how $\bg_t$ can be trained using a GAN-type objective. 

\subsection{Generative Adversarial Imputation Nets (GAIN) \citep{yoon2018gain}}
\label{sec.gain}

% \red{ review, check, is this section correct?}
GAIN arises from a natural intuition: 
if an imputation is perfect, the imputed entries should be indistinguishable from the originally observed ones.
% Given perfect imputation, it should be impossible to predict the missingness mask $\bM$. 
Thus, GAIN trains a generator so that a classifier cannot predict the missingness mask $\bM$ accurately. 
% The accuracy is  measured by the cross-entropy loss. 

Let $f(m_j, \bx, \bm_{-j}) \in [0, 1]$ be a probabilistic classifier modeling the conditional probability $\Prob_{M_j| \Xtm, \bM_{-j}}$, which is the probability that the $j$-th component is observed given the imputed data from the previous step, $\bX^{(t-1)}$, and the rest of the missingness mask, $\bM_{-j}$.
% $\mathcal{L}[c(X), Y]$ be the \red{negative} cross entorpy loss evaluated on a classifier $c$ and an output-input pair $(Y, X)$. 
% GAIN learns $c$ using the current imputed vectors and their missingness masks. 
% GAIN uses the following iterative algorithm. 

For each iteration $t = 1, \dots, T$, GAIN performs the following steps:
\begin{enumerate}
    \item Classifier $f_{tj} := \arg\min_f 
    - \E \left[
        \log f\left(M_j, \Xtm, \bM_{-j}\right)
    \right]$, 
    % \E\left[\mathcal{L}\left[c\left(\Xtm, \bM_{-j}\right), M_j\right]\right]$ 
    for all $j = 1 \dots d$.   
    \item Imputer $\bg_t := \arg\max_{\bg} - \sum_j \E \left[
        \log f_{tj}\left(M_j, \bX^{(t)}(\bg), \bM_{-j}\right)
    \right] - \lambda (\text{reconstruction error})$. 
    \item Update $\bX^{(t)} \stackrel{\text{impute}}{\leftarrow} \bX^{(t)}(\bg_t),\ t \leftarrow t + 1$.
\end{enumerate}
Here, $\bX^{(t)}(\bg)$ is the imputed vector from the imputer $\bg$ as defined in \eqref{eq.impute.seq}.
At iteration $t$, we first train classifiers $f_{tj}$ by \emph{minimizing} the cross-entropy loss. 
Subsequently, we train the imputer $\mathbf{g}_t$, parameterized as a neural network, to reduce the classifiers' predictive accuracy by \emph{maximizing} the sum of cross-entropy losses across all $j$, while simultaneously minimizing the reconstruction error.

In practice, the minimization and maximization steps are performed via alternating gradient updates, leading to adversarial training. 
We observe that GAIN works well in practice. However, the adversarial training is known to be difficult and may suffer from mode collapses \citep{li_limitations_2018,zhang_convergence_2018}. Moreover, balancing the reconstruction error and the cross entropy loss requires careful tuning of the hyperparameter $\lambda$. 
% the trade off between the reconstruction error and the cross entropy loss (controlled by $\lambda$) is not well understood.

Recently, it was observed that 
% A key observation is that, 
the generator training of GAIN can be viewed as a process of \emph{breaking the dependency} between $\bM$ and $\bX^{(t)}(\bg)$ \citep{liu24minimizing}. Indeed, if $\bX^{(t)}(\bg) \ind \bM$, meaning $\bX^{(t)}(\bg)$ is not predictive of $\bM$ at all, then the cross-entropy loss is maximized. 
% However, this observation was only briefly mentioned 
% and no rigorous procedures was proposed to exploit this 
In this paper, we rigorously explore this idea of imputation by dependency reduction. Guided by this principle, our algorithm explicitly aims to reduce the dependency between $\bM$ and $\bX^{(t)}(\bg)$.

% \red{perhaps simplify and also mention what GAIN does not do the MI minimization. }

\subsection{Flow-based Sampling and Rectified Flow}
\label{sec.flow}
% While GAN-based generative modeling has enjoyed great success, it has also been criticized for the difficulty in training and suffering from mode collapses. 
% In particular, for divergence-based GAN, the training loss may easily ``blow up'' due to the non-overlapping support of t 
As an alternative to GANs, flow-based generative models have attracted significant interest in recent years. These algorithms first train a ``velocity field'' over time $\tau \in [0,1]$ according to some loss function, and then generate samples by solving the ODE or SDE using the learned velocity field as the drift. In some cases, the loss function is a simple least-squares loss, leading to a training routine simpler and more stable than GAN.  
Methods in this category include Score-based generative models \citep{song2019generative, song2021scorebased},  Rectified Flow \citep{liu2023flow}, Flow Matching \citep{lipman2023flow}, Diffusion Schr\"odinger Bridge \citep{de2021diffusion}
, among others. 

Rectified flow is one of the simplest flow-based generative models. 
Given two random vectors $\bX_0$ (from a reference distribution) and $\bX_1$ (from a target distribution), 
% it transports samples from the ``reference'' distribution $\bX_0$ to the ``target'' distribution $\bX_1$ by using an ODE
% $    
% \mathrm{d} \bZ_\tau = \bv^*(\bZ_\tau, \tau) \mathrm{d} \tau. 
% $
the velocity field $\bv^*$ is trained by minimizing the following objective:
\begin{align}
    \label{eq.origin.rectified}
    \bv^* = \arg\min_{\bv} \int_0^1 \E\left[\|\bX_1 - \bX_0 - \bv(\bX_\tau, \tau )\|^2\right] \mathrm{d}\tau,  
\end{align}
where $\bX_\tau$ is the linear interpolation between $\bX_0$ and $\bX_1$, defined as $\bX_\tau = \tau \bX_1 + (1 - \tau) \bX_0$ for $\tau \in [0, 1]$. Samples are then generated by solving the ODE:
\begin{align}
\label{eq.ode}
\frac{\mathrm{d} \bZ_\tau}{\mathrm{d} \tau} = \bv^*(\bZ_\tau, \tau). 
\end{align}
One can prove that, if $\bZ_0 \eqd \bX_0$, then $\bZ_\tau \eqd \bX_\tau$ for all $\tau \in [0, 1]$. This ``marginal-preserving property'' 
% \red{\sout{implies: if one initializes the ODE using a sample $\bX_0$ from the reference distribution, one can obtain a sample from the target distribution by solving the ODE at $\tau = 1$.}
guarantees that solving the ODE from $\tau = 0$ to $\tau = 1$ transports samples from the reference distribution to the target distribution.
% One can prove that if the initial condition $\bZ_0$ follows the reference distribution ($\bZ_0 \stackrel{d}{=} \bX_0$), then the solution at time $\tau$ follows the interpolated distribution ($\bZ_\tau \stackrel{d}{=} \bX_\tau$) for all $\tau \in [0, 1]$.
% which guarantees that the flow transports samples from the reference distribution to the target distribution. 
% and it confirms that, given the optimal velocity field, the ODE does transport samples from the reference distribution to the target distribution. 

% \red{can be removed} Interestingly, one can rerun the training and flow multiple times. This procedure leads to a ``straightened'' flow path. 
% For more details, see Section 2.2 in \citep{liu2023flow}. 
% In this paper, we consider rectified flow with only one run. 
% Rectified flow considers a distribution matching ODE of the form: 
% \begin{align}
%     \frac{d\bX}{dt} = \bv(\bX, t), \quad \bX(0) = \bX_0,
% \end{align}

% \textbf{Our main motivation is to apply this rationale to a flow-based model}.

% In fact, 
% training $\bg_t$ can be seen as a process of reducing the dependency between $\bM$ and $\Xtm$: 
% If $\bX \ind \bM$, $\bX$ is not predictive to $\bM$, then the cross entropy is maximized. 
% \textbf{Applying this rationale to a flow-based model} is the main motivation of our work. 

% \red{ add more recent variants of GAIN.}

% ==============================================================
\section{Reducing Dependency with Mutual Information}
In this section, we introduce the MIRI imputation framework, obtain the optimal imputer under this framework using rectified flow, and theoretically justify its validity under the Missing Completely at Random (MCAR) setting. The validity for the Missing at Random (MAR) scenario is in Appendix \ref{app:mar_proof}.

\begin{algorithm}[t]
    \caption{MIRI with Rectified Flow (Single Imputation)}
    \small
    \begin{algorithmic}[1]
    \REQUIRE Paired, i.i.d. observations and masks $\{(\tilde{\bX}, \bM)\}$, Maximum Iterations $T$, Maximum SGD steps $N$, Batch Size $B$, Optimizer $\verb*|SGD_update|$ and an ODE solver $\verb*|ODE_Solver|$. 
    % number of ODE iterations $N$  
    \STATE $(\bX^{(0)}, \bM) \leftarrow \verb|initial_impute|(\tilde{\bX}, \bM)$.  
    % $\forall i, \bX^{(0)}_i = (1 - \bM_i) \odot \tilde{\bX} + \bM_i \odot \bZ$, $\bZ \sim \mathcal{N}(0, 1)$, 
    % \STATE Initialize rectified flow model $\mathcal{M}$
    \FOR{$t = 1$ to $T$}
        \STATE Initialize a neural network model $\bv$.  
        \FOR{$n = 1$ to $N$}
            \STATE Sample batch 
            $\{(\bX_{0}^{(j)}, \bM_{0}^{(j)})\}_{j=1}^B \sim \Prob_{\bX^{(t-1)}, \bM_{0}}$, 
            \STATE Sample batch
            $\{\bX_{1}^{(j)}\}_{j=1}^B \sim \Prob_{\bX^{(t-1)}}$, and time indices $\{\tau_j\}_{j=1}^B \sim \text{Uniform}(0, 1)$ 
            \STATE $\forall j, \bX_{\tau_j}^{(j)} \leftarrow (1 - \tau_j) \bX_{0}^{(j)} + \tau_j \bX_{1}^{(j)}, \bY^{(j)}  \leftarrow \bX_{1}^{(j)} - \bX_{0}^{(j)}$
            % \STATE $\mathcal{L} = \text{criterion}\left(\bM \odot \mathcal{M}(\bX_\tau, \bM, \tau), \bX_1 - \bX_0\right)$
            \STATE $\bv_t \leftarrow 
            \verb|SGD_update|\left( \nabla_\bv \sum_{j} \left\| \bY^{(j)} - \bv(\bX_{\tau_j}^{(j)}, \bM_{0}^{(j)} \odot \bX_{0}^{(j)}, \bM_0^{(j)} \odot \bX_{1}^{(j)}, \bM_{0}^{(j)}, \tau_j) \right\|^2\right)$
        \ENDFOR
        \STATE $\bX^{(t)} \leftarrow \bM\odot \bX^{(t-1)} + (1-\bM) \odot \verb*|ODE_Solver|((1-\bM) \odot \bv_t, \bX^{(t-1)}, \bM)$
    \ENDFOR
    \RETURN $\{\bX^{(T)}\}$
    \end{algorithmic}
    \label{alg:mi-impute}
\end{algorithm}

\subsection{Mutual Information Reducing Iterations (MIRI)}
\label{sec.miri}
We study the problem of training an imputer $\bg$ to \emph{reduce the dependency} between the imputed sample $\bX(\bg)$ and missingness mask $\bM$. The approach adopted by \citep{liu24minimizing} is to minimize the mutual information between them, i.e., the optimal $\bg$ is given as:
% Thus, we can consider the following imputer training criterion:
\begin{align}
    \label{eq.mutual.info}
    \bg \in \arg\min_{\bg} \mathrm{I}(\bX(\bg); \bM) = \arg\min_{\bg} \mathrm{D} [\Prob_{\bX(\bg), \bM} \Vert \Prob_{\bX(\bg)} \otimes \Prob_{\bM}],
\end{align}
where $\mathrm{D}[\mathbb{P} \Vert \mathbb{Q}]$ is the Kullback-Leibler (KL) divergence of probability distributions $\mathbb{Q}$ from $\mathbb{P}$. 

However, the mutual information is not directly computable as we do not have access to the true distribution $\Prob_{\bX(\bg), \bM}$ and $\Prob_{\bX(\bg)} \otimes \Prob_{\bM}$. 
A potential solution is to transform this optimization problem into a bi-level min-max adversarial optimization problem:
Fixing $\bg$, it is possible to estimate the mutual information using samples from $\Prob_{\bX(\bg), \bM}$ and $\Prob_{\bX(\bg)} \otimes \Prob_{\bM}$  
with Mutual Information Neural Estimation (MINE) \citep{belghazi18a}, which \textit{maximizes} the Donsker-Varadhan lower bound \citep{donsker1975} to approximate the KL divergence. 
After that, we can \textit{minimize} the estimated mutual information with respect to $\bg$. This process is repeated until convergence. If one solves both optimization problems one gradient step at a time, this algorithm is the classic adversarial training that GAN is known for. 

% However, \emph{in contrary} to what is suggested in \citep{liu24minimizing}, \eqref{eq.mutual.info} \emph{cannot} be minimized by trivially applying forward or reverse KL Wasserstein Gradient Flow (WGF). 
% WGF can indeed minimize a KL divergence $\mathrm{D}[\mathbb{P}|\mathbb{Q}]$ when the distribution to be optimized is either $\mathbb{P}$ or $\mathbb{Q}$. However, in \eqref{eq.mutual.info}, 
% the probability to be optimized ($\Prob_{\bX(\bg)}$) appears both in $\mathbb{P}$ and $\mathbb{Q}$, thus neither forward nor reverse KL WGF applies to this setting. 

\emph{Contrary} to the suggestion in \citep{liu24minimizing}, the mutual information in \eqref{eq.mutual.info} \emph{cannot} be minimized simply by directly applying forward or reverse KL Wasserstein Gradient Flow (WGF). 
While WGF can minimize a KL divergence $\mathrm{D}[\mathbb{P}\Vert\mathbb{Q}]$ when the distribution to be optimized is either $\mathbb{P}$ or $\mathbb{Q}$, the distribution of interest in our setting ($\mathbb{P}_{\mathbf{X}(\mathbf{g})}$) appears simultaneously in both $\mathbb{P}$ and $\mathbb{Q}$. 
As a result, neither the forward nor reverse KL formulation of WGF is directly applicable. Therefore, while the WGF procedure proposed in \citep{liu24minimizing} is conceptually appealing, it does not correctly minimize the mutual information objective as formulated in \eqref{eq.mutual.info}.

% Motivated by the limitation of both adversarial training and WGF, we develop a new approach to minimize the mutual information in \eqref{eq.mutual.info}.
Motivated by the limitation of both adversarial training and WGF, we propose a sequential imputation algorithm that reduces the mutual information, called Mutual Information Reducing Iterations (MIRI). 
Instead of targeting the unknown marginal distribution $\Prob_{\bX(\bg)}$, we use the imputed data distribution from the \textit{previous} iteration, $\bX^{(t-1)}$, as a stable target.
Let us denote the joint probability of the pair $(\bX^{(t)}
(\bg), \bM)$ as $\PXgM$ and the product measure of $\bX^{(t-1)}$ and $\bM$ as $\PXPMtm$.

For each iteration $t = 1 \dots T$, the algorithm performs the following steps: 
\begin{enumerate}
    \item \textbf{Find the Optimal Imputer}: $\bg_t \in \arg\min_{\bg} \KLgtm$. 
    \item \textbf{Update Data}: $\bX^{(t)} \stackrel{\text{impute}}{\leftarrow} \bX^{(t)}(\bg_t), t \leftarrow t + 1$. 
\end{enumerate}
% \red{\sout{The difference between the first step and \eqref{eq.mutual.info} is that we replace $\Prob_{\bX(\bg)} \otimes \Prob_{\bM}$ in \eqref{eq.mutual.info} with $\PXPMtm$, the product measure of \emph{the previous iteration}.}}
% With such a modification, one might wonder if MIRI can actually reduce the mutual information. 
% Reassuringly, we have the following result: 
% This modification naturally raises the question of whether MIRI still effectively reduces mutual information. Reassuringly, 
We can confirm that MIRI does indeed reduce the mutual information:
% it is not obvious that the mutual information is reduced. 
% We now prove that MIRI indeed reduces the mutual information between $\bX^{(t)}$ and $\bM$ after each iteration.
% Now we prove a property regarding the mutual inforamtion between $\bX^{(t)}$ and $\bM$ after each iteration. 
\begin{proposition}
    \label{prop.miri}
    The mutual information between $\bX^{(t)}$ and $\bM$ is non-increasing after each iteration. 
\end{proposition}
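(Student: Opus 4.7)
The plan is to prove the non-increasing property by bounding $I(\bX^{(t)}; \bM)$ above by the MIRI objective value evaluated at $\bg_t$, and then bounding that objective by its value at a trivial ``identity'' imputer which leaves $\bX^{(t-1)}$ unchanged.

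First I would exhibit a competitor imputer $\bg_{\mathrm{id}}$ which simply returns the missing entries of $\bX^{(t-1)}$ itself, so that from the recursion \eqref{eq.impute.seq} we have $\bX^{(t)}(\bg_{\mathrm{id}}) = \bM \odot \bX^{(t-1)} + (1-\bM) \odot \bX^{(t-1)} = \bX^{(t-1)}$. Plugging this into the MIRI objective gives
\begin{align*}
\mathrm{D}\!\left[\Prob_{\bX^{(t)}(\bg_{\mathrm{id}}), \bM} \Vert \Prob_{\bX^{(t-1)}} \otimes \Prob_{\bM}\right] = \mathrm{D}\!\left[\Prob_{\bX^{(t-1)}, \bM} \Vert \Prob_{\bX^{(t-1)}} \otimes \Prob_{\bM}\right] = \mathrm{I}(\bX^{(t-1)}; \bM).
\end{align*}
Since $\bg_t$ is, by definition, a minimizer of $\mathrm{D}[\Prob_{\bX^{(t)}(\bg), \bM} \Vert \Prob_{\bX^{(t-1)}} \otimes \Prob_{\bM}]$, we immediately obtain $\mathrm{D}[\Prob_{\bX^{(t)}, \bM} \Vert \Prob_{\bX^{(t-1)}} \otimes \Prob_{\bM}] \le \mathrm{I}(\bX^{(t-1)}; \bM)$.

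Next I would relate the MIRI objective at $\bg_t$ to the mutual information $\mathrm{I}(\bX^{(t)}; \bM)$ using a short density-ratio identity. Writing out the log-likelihood ratio and splitting it gives
\begin{align*}
\mathrm{D}\!\left[\Prob_{\bX^{(t)}, \bM} \Vert \Prob_{\bX^{(t-1)}} \otimes \Prob_{\bM}\right] = \mathrm{D}\!\left[\Prob_{\bX^{(t)}, \bM} \Vert \Prob_{\bX^{(t)}} \otimes \Prob_{\bM}\right] + \mathrm{D}\!\left[\Prob_{\bX^{(t)}} \Vert \Prob_{\bX^{(t-1)}}\right],
\end{align*}
which follows because $\log \tfrac{p_{\bX^{(t)}}(x) p_{\bM}(m)}{p_{\bX^{(t-1)}}(x) p_{\bM}(m)} = \log \tfrac{p_{\bX^{(t)}}(x)}{p_{\bX^{(t-1)}}(x)}$ and taking expectation under $\Prob_{\bX^{(t)}, \bM}$ integrates out $\bM$. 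The first term on the right is precisely $\mathrm{I}(\bX^{(t)}; \bM)$ and the second is non-negative, so
\begin{align*}
\mathrm{I}(\bX^{(t)}; \bM) \le \mathrm{D}\!\left[\Prob_{\bX^{(t)}, \bM} \Vert \Prob_{\bX^{(t-1)}} \otimes \Prob_{\bM}\right] \le \mathrm{I}(\bX^{(t-1)}; \bM),
\end{align*}
which is the desired inequality.

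The argument is essentially a two-line comparison and is not technically difficult. The only real conceptual step is recognizing that the MIRI surrogate $\mathrm{D}[\Prob_{\bX^{(t)}, \bM} \Vert \Prob_{\bX^{(t-1)}} \otimes \Prob_{\bM}]$ upper-bounds the true mutual information $\mathrm{I}(\bX^{(t)}; \bM)$ (by a term measuring marginal drift between consecutive iterations), and admits the identity imputer as an explicit feasible point with value $\mathrm{I}(\bX^{(t-1)}; \bM)$. The main thing to check carefully is that the feasibility of $\bg_{\mathrm{id}}$ is genuine under the definition of $\bg$, and that the density-ratio decomposition is valid under the implicit absolute-continuity assumptions; both are straightforward given the setup.
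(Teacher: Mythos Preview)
Your proof is correct and follows essentially the same approach as the paper's. The paper writes the difference $\mathrm{I}(\bX^{(t)};\bM)-\mathrm{I}(\bX^{(t-1)};\bM)$ as $A+B$ with $A=\KLt-\KLttm$ and $B=\KLttm-\KLtm$, then shows $B\le 0$ via the identity imputer and $A\le 0$ via Gibbs' inequality; your chain of inequalities $\mathrm{I}(\bX^{(t)};\bM)\le \KLttm \le \mathrm{I}(\bX^{(t-1)};\bM)$ is the same two steps, with your KL decomposition $\KLttm=\mathrm{I}(\bX^{(t)};\bM)+\mathrm{D}[\Prob_{\bX^{(t)}}\Vert\Prob_{\bX^{(t-1)}}]$ making the Gibbs step explicit.
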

See Appendix \ref{sec.proof.miri} for the proof. 
% In reality, the mutual information $\KLgtm$ is not directly computable as we do not have access to the true distribution $\PXgM$ and $\PXPMtm$.  
In Section \ref{sec.gain.As.mi}, we show that the GAIN algorithm can be viewed as an approximate implementation of the above MIRI algorithm.

% However, adversarial training is known to be unstable and hard to train cite. 

% Another way to see the why MIRI can be effective is through
% This iteration 
Moreover, we show that minimizing the KL divergence in the first step of MIRI has a sufficient and necessary condition:  
 % a sufficient and necessary condition for a $\bg$ to be optimal.    
% In this paper, we focusing on finding 
% $\bg$. 
% a minimizer of $\KLgtm$. 
\begin{proposition} 
    \label{eq.kl.decompose}
    Let $p_\bg(\bx, \bm)$ be the density of $(\bX^{(t)}(\bg_t), \bM)$ and $q(\bx)$ be the density of $\bX^{(t-1)}$.
    $\bg \in \arg\min_\bg \KLgtm$ if and only if 
    \begin{align}
        \label{eq.peqq}
        p_{\bg}( \bx_{1-\bm} | \bx_{\bm}, \bm) = q( \bx_{1-\bm} | \bx_{\bm}), \forall \bx, \bm.
    \end{align}
\end{proposition}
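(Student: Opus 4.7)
The plan is to decompose $\KLgtm$ into a $\bg$-free piece plus a non-negative piece that is the only place $\bg$ enters, and then read off the argmin from when the non-negative piece vanishes.

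First I would factor the joint density as $p_\bg(\bx,\bm) = p_\bM(\bm)\, p_\bg(\bx_\bm \mid \bm)\, p_\bg(\bx_{1-\bm} \mid \bx_\bm, \bm)$ and the product-measure density as $q(\bx) p_\bM(\bm) = p_\bM(\bm)\, q(\bx_\bm)\, q(\bx_{1-\bm} \mid \bx_\bm)$. The crucial structural observation, which I would state explicitly, is that the imputation rule \eqref{eq.impute.seq} only rewrites entries where $M_j = 0$, so on the event $\{\bM = \bm\}$ one has $\bX^{(t)}(\bg)_\bm = \bX^{(t-1)}_\bm$ identically. Hence the conditional density $p_\bg(\bx_\bm \mid \bm)$ equals the conditional density of $\bX^{(t-1)}_\bm$ given $\bM = \bm$ for every imputer $\bg$; in particular it is $\bg$-independent. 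Call this common conditional density $p(\bx_\bm \mid \bm)$.

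Next, I would substitute both factorizations into the KL integral, cancel the $p_\bM(\bm)$ factors inside the logarithm, and group the terms according to whether they depend on $\bg$:
\begin{align*}
\KLgtm \;=\; C \;+\; \E\!\left[\,\mathrm{D}\!\left[p_\bg(\,\cdot\,\mid \bX_\bm, \bM) \,\Vert\, q(\,\cdot\,\mid \bX_\bm)\right]\right],
\end{align*}
where the outer expectation is over $(\bX_\bm, \bM) \sim p_\bM(\bm) p(\bx_\bm \mid \bm)$ and $C = \E[\log(p(\bX_\bm \mid \bM)/q(\bX_\bm))]$ is a $\bg$-independent constant. The inner conditional KL is non-negative, so the second term attains its minimum (zero) exactly when $p_\bg(\bx_{1-\bm} \mid \bx_\bm, \bm) = q(\bx_{1-\bm} \mid \bx_\bm)$ almost surely with respect to the outer measure, which is precisely \eqref{eq.peqq}.

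The iff then drops out immediately: any $\bg \in \arg\min_\bg \KLgtm$ must zero the second term, giving \eqref{eq.peqq}; conversely, any $\bg$ satisfying \eqref{eq.peqq} zeros the second term and therefore attains the minimum value $C$. I do not anticipate a real obstacle here: the only delicate point is justifying that $p_\bg(\bx_\bm \mid \bm)$ does not depend on $\bg$, which is a direct consequence of the imputation formula \eqref{eq.impute.seq}, and interpreting the universal quantifier in \eqref{eq.peqq} in the almost-sure sense dictated by the outer measure. Once those two bookkeeping points are nailed down, the argument is essentially an application of the chain rule for KL divergence.
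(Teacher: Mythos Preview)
Your proposal is correct and follows essentially the same route as the paper's proof: both exploit that the imputer leaves the observed coordinates untouched so that $p_\bg(\bx_\bm\mid\bm)$ (equivalently $p(\bx_\bm,\bm)$) is $\bg$-free, then split the KL into a $\bg$-independent constant plus an expected conditional KL $\mathrm{D}[p_\bg(\cdot\mid\bx_\bm,\bm)\Vert q(\cdot\mid\bx_\bm)]$, and read off the iff from non-negativity of the latter. Your explicit remark about interpreting \eqref{eq.peqq} almost surely with respect to the outer measure is a nice clarification that the paper leaves implicit (it instead assumes $p(\bx_\bm\mid\bm)$ and $p(\bm)$ are strictly positive).
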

The proof can be found in the Appendix \ref{sec.proof.kl.decompose}. This result shows that the optimal $\bg$ should sample from the conditional distribution of the missing components given the non-missing components according to the marginal distribution of $\bX^{(t-1)}$, the imputed data in the previous iteration. 
\eqref{eq.peqq} inspires us to construct a flow-based generative model with a target distribution $q( \bx_{1-\bm} | \bx_{\bm})$. 

As we discuss in Section \ref{sec:round-robin} and \ref{sec.contem}, \eqref{eq.peqq} is also the key design principle behind many classic and contemporary data imputation algorithms. 
% The equation \eqref{eq.peqq} also indicate that finding the optimal imputer can be framed as a conditional distribution matching problem, which flow-based methods are well suited to solve.  

% \eqref{eq.peqq} is also instrumental to our effort on applying flow-based methods to missing data imputation: 

% Next, we show how to construct such an optimal imputer using ``an imputation ODE''. 
% with a velocity field trained using a rectified flow objective function and then, we draw the connection between the optimal imputer and the imputation ODE. 
% in Section 
% \ref{sec.rectified} and then prove that it is the optimal generator in the sense that it minimizes the mutual information $\KLgtm$.

\subsection{Imputation by Rectified Flow}
\label{sec.rectified}
In this section, we show how to construct an optimal imputer for MIRI using \emph{an imputation ODE} obtained via rectified flow training.
The optimality condition of Proposition~\ref{eq.kl.decompose} requires an imputer to sample from target distribution $q(\bx_{1-\bm} | \bx_{\bm})$. 
The standard diffusion models are restrictive, as they are constrained to a fixed Gaussian reference. 
We therefore use rectified flow to transport samples from the current imputation (reference) to the target distribution.  

We now zoom in on Step 1 of the MIRI algorithm at iteration $t$.  

Let $(\bX_1, \bM_1)$ be a pair of random vectors drawn from the product measure $\PXPMtm$
and $(\bX_0, \bM_0)$ from the joint distribution $\PXMtm$. 
The pairs are drawn such that $\bX_1 \ind (\bX_0, \bM_0)$. $\bX_\tau$ is defined as the linear interpolation of $\bX_0$ and $\bX_1$, following the same construction as in Section~\ref{sec.flow}. 
% Define interpolation between $\bX_0$ and $\bX_1$ as
% $\bX_\tau := \tau \bX_1 + (1-\tau) \bX_0$ for $\tau \in [0,1]$.

Consider the following rectified flow training objective function: 
\begin{align}
    \label{eq.rectified}
    \bv^* := \arg\min_{\bv} \int_{0}^1 \E
     \left[
        \| \bX_1 - \bX_0  -  \bv(\bX_{\tau, 1-\bM_0}, \bX_{0, \bM_0}, \bX_{1, \bM_0}, \bM_0, \tau) \|^2 
    \right] \mathrm{d} \tau, 
\end{align}
where $\bX_{0, \bM_0} := \bM_0 \odot \bX_0$ and others are defined similarly. 
% As we will see later, minimizing this rectified flow loss function will produce a vector field that .  \red{need visualization}

% Notice that we have the mask $\bM_0$ appearing both as an input to $\bv$ as well as the dimension-wise mask to $\bv$.

% where $\E[\cdot]_i$ represents the $i$-th dimension of the expected vector. 
% One can see that $v(\bx, \bm)$ 

% Now consider an ODE evolves under this optimal velocity field 
% $\bv^*$. 
% To use this optimal velocity field for imputation, we need to define a few notations. 
%[\bz, \bx_{1-\bm}]$ meaning the ``missing dimensions'' of $\by$ are $\bz$ and the ``non-missing dimensions'' of $\by$ are $\bx_{1-\bm}$.
% Recall $\bx_\bm$ and $\bx_{1-\bm}$ are the missing and non-missing components of $\bx$. 
We can define an \emph{imputation process} by using the optimal velocity field \(\bv^*\). Given a partially observed vector where the missing entries are padded with zeros, i.e., \( [\boldsymbol{0}, \bx_{\bm}] \), the imputation process dynamics are governed by the following ODE:
\begin{align}
    \label{eq:impute.ode}
    \frac{d \bZ_\tau}{d\tau} = (1 - \bm) \odot \bv^*(\bZ_\tau, [\boldsymbol{0}, \bx_{\bm}], [\boldsymbol{0}, \bx_{\bm}], \bm, \tau),
\end{align}
with initial condition
\begin{align}
    \label{eq.init.cond}
    \bZ_0 \sim \mathbb{P}_{\bX_{0, 1 - \bm} \,\big|\, \{ \bX_{0, \bm} = [\boldsymbol{0}, \bx_{\bm}],\, \bM_0 = \bm \}}.
\end{align}
We define the imputer \(\bg^*\) as the solution to the imputation ODE \eqref{eq:impute.ode} evaluated at terminal time \(\tau = 1\):
\begin{align}
    \label{eq.optimal.g}
    \bg^*(\bZ_0, \bx_{\bm}, \bm) := \bZ_1.
\end{align}
% In words, the missing components of $\bx$ are replaced by the solution to the above ODE at $\tau = 1$. 
% Different from GAN type generator, here $\bZ_0$ is not a random seed, but the initial condition of the ODE.
% \subsection{Theoretical Analysis}
% \label{sec.theory}
We can show that, $\bg^*$ \emph{is an optimal imputer} to the MIRI algorithm 
that we introduced in Section \ref{sec.miri}. 
\begin{theorem}
    \label{thm:main}
    $\bg^*$ defined in \eqref{eq.optimal.g} is an optimal imputer in the sense that 
    \begin{align}
        \bg^* \in \arg\min_{\bg} \KLgtm. 
    \end{align}
\end{theorem}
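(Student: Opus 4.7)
The plan is to reduce Theorem~\ref{thm:main} to the conditional-matching condition from Proposition~\ref{eq.kl.decompose}: it suffices to show that $p_{\bg^*}(\bx_{1-\bm}\mid \bx_\bm,\bm) = q(\bx_{1-\bm}\mid \bx_\bm)$ for almost every $(\bm,\bx_\bm)$. Because the update~\eqref{eq.impute.seq} pins the observed coordinates to $\bx_\bm$ and fills the missing coordinates with $\bg^*(\bZ_0;\bx_\bm,\bm)$, the task reduces to showing that $\bg^*(\bZ_0;\bx_\bm,\bm)$ has law $q(\cdot\mid \bx_\bm)$ whenever $(\bM,\bX^{(t-1)}_\bm)=(\bm,\bx_\bm)$.

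Next I would read the training objective~\eqref{eq.rectified} as a \emph{conditional} rectified flow in the missing coordinates, with the triple $\mathbf{c}:=(\bX_{0,\bM_0},\bX_{1,\bM_0},\bM_0)$ playing the role of the conditioning. The marginal-preserving property recalled in Section~\ref{sec.flow} then extends verbatim: the optimal $\bv^*$ equals the conditional expectation $\E[\bX_1-\bX_0\mid \bX_{\tau,1-\bM_0},\mathbf{c}]$, and for every frozen value of $\mathbf{c}$ the ODE with drift $\bv^*(\cdot,\mathbf{c},\tau)$ transports the conditional law of $\bX_{0,1-\bM_0}\mid \mathbf{c}$ at $\tau=0$ into that of $\bX_{1,1-\bM_0}\mid \mathbf{c}$ at $\tau=1$. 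Setting $\mathbf{c}=([\mathbf{0},\bx_\bm],[\mathbf{0},\bx_\bm],\bm)$ and premultiplying the drift by $(1-\bm)$---which is cosmetic, because under this frozen conditioning the observed coordinates already satisfy $\bX_{0,\bm}=\bX_{1,\bm}=\bx_\bm$ and hence receive zero velocity---reproduces exactly the imputation ODE~\eqref{eq:impute.ode} with initial condition~\eqref{eq.init.cond}.

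It remains to simplify the two conditional laws using the independence structure built into Section~\ref{sec.rectified}: $(\bX_0,\bM_0)\sim\PXMtm$, $\bX_1\sim\PXtm$, and $\bX_1\ind(\bX_0,\bM_0)$. Since conditioning on $\bX_{1,\bM_0}=[\mathbf{0},\bx_\bm]$ collapses to conditioning on $\bX_{1,\bm}=\bx_\bm$ and then drops out of the law of $\bX_0$, the initial distribution prescribed by~\eqref{eq.init.cond} is the conditional of $\bX^{(t-1)}_{1-\bm}$ given $(\bX^{(t-1)}_\bm=\bx_\bm,\bM=\bm)$---exactly the distribution of the missing coordinates of $\bX^{(t-1)}$ that is fed into the imputer in practice---while the terminal law of $\bZ_1$ is the conditional of $\bX_{1,1-\bm}$ given $\bX_{1,\bm}=\bx_\bm$, which equals $q(\cdot\mid \bx_\bm)$ because $\bX_1\sim q$. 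Combining with the first paragraph and invoking Proposition~\ref{eq.kl.decompose} concludes the proof.

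The hardest step will be the careful invocation of the conditional marginal-preserving property at the required level of generality: one must verify, for instance by a disintegration argument, that the minimizer of~\eqref{eq.rectified} equals the claimed conditional expectation and that the ODE driven by the frozen-$\mathbf{c}$ velocity field preserves the $\mathbf{c}$-conditional marginal of $\bX_{\tau,1-\bM_0}$ along $\tau\in[0,1]$ for almost every $\mathbf{c}$. The remaining steps are essentially bookkeeping with the independence $\bX_1\ind(\bX_0,\bM_0)$ and the definitions~\eqref{eq.impute.seq} and~\eqref{eq.optimal.g}.
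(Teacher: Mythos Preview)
Your proposal is correct and follows essentially the same route as the paper: reduce to Proposition~\ref{eq.kl.decompose}, interpret~\eqref{eq.rectified} as a conditional rectified flow with conditioning $\mathbf{c}=(\bX_{0,\bM_0},\bX_{1,\bM_0},\bM_0)$, establish the marginal-preserving property for the frozen-$\mathbf{c}$ ODE (the paper packages this as Lemma~\ref{prop.marginal}, proved via a continuity-equation argument), and then simplify the endpoint laws using $\bX_1\ind(\bX_0,\bM_0)$. Your identification of the ``hardest step'' is exactly what the paper isolates as Lemma~\ref{prop.marginal} and its supporting Lemmas~\ref{lem.ci}--\ref{lm:jointcont}.
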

The main technical challenge of the proof is to show that the ODE \eqref{eq:impute.ode} can indeed establish a ``marginal-preserving'' process from $\mathbb{P}_{\bX^{(t-1)} |\bM}$ to $\mathbb{P}_{\bX^{(t-1)}}$. This is different from the classic proof since both our training objective and ODE are different from the regular rectified flow. 
% transport distribution to $q( \bx_{1-\bm} | \bx_{\bm})$ if we initialize it properly. 
See Appendix \ref{thm.proof.main} for the proof. With an imputer $\bg^*$, we can proceed to the second step of the MIRI algorithm.

\subsection{Initial Condition}
% For simplicity, we 
To run the imputation process, we require at least a sample of $\bX_{0,1 - \bm} \,\big|\, \{ \bX_{0, \bm} = \bx_{\bm},\, \bM = \bm \}$. By the definition of $\bX_0$, this is equivalent to sampling from $\bX^{(t-1)}_{1-\bm}|\{\bX^{(t-1)}_\bm = \bx_\bm, \bM=\bm\}$. This poses no difficulty, since at each iteration $t$, we maintain access to the joint sample $(\bX^{(t-1)}, \bM)$, from which we can extract the relevant conditional sample by slicing.
% How do we obtain multiple samples from such a distribution? 
However, in the first iteration, we need samples from $\bX^{(0)}_{1-\bm}|\{\bX^{(0)}_\bm = \bx_\bm, \bM=\bm\}$ to kickstart the MIRI algorithm. 

One can sample from any distribution for those initial samples and in our experiment, we simply draw independent samples from the normal $N(0, 1)$ or uniform $U(0,1)$ to fill out each missing component. Alternatively, one can use another imputation algorithm to suggest initial samples. 

% Multiple imputation means that the imputer $\bg(\bZ_0)$ return multiple random samples for the same observed vector $\bx_\bm$ and take the average of these samples as the final imputation result. MRIR can achieve this by drawing $K$ initial samples for each $\bx_m$, which would lead to a $K$ different imputation processes using \eqref{eq:impute.ode}. By averaging these imputations at the end of the MIRI algorithm, we can achieve multiple imputation using MIRI. 

% Multiple imputation refers to the process where the imputer generates multiple random imputation samples for the same observed vector $\bx_\bm$, and the final imputation result is obtained by averaging these samples. The MIRI framework supports this by drawing $K$ initial samples for each $\bx_\bm$, resulting in $K$ separate imputation trajectories via \eqref{eq:impute.ode}. Averaging the outputs of these trajectories at the end of the MIRI algorithm yields a multiple imputation estimate.

% which is essentially the samples from the initial imputation. 

% This is an algorithm that we can set 

% Note that this ODE is only defined for the \textit{the missing components} (indexed by $\bm$). All the non-missing components (indexed by $1-\bm$) remain the same over time $\tau$ and has a value $\bx_{1-\bm}$. S

\subsection{Practical Implementation}
    
Algorithm~\ref{alg:mi-impute} presents the full MIRI procedure on a finite sample set ${(\tilde{\bX}, \bM)}$. 
Note that the superscript $(t)$ represents iteration count while $(j)$ is the sample index in a batch. 
We model $\bv_t$ with a neural network trained via stochastic gradient descent. The function \verb|initial_impute| replaces \texttt{NaNs} with initial guesses. 
% A few implementation details are needed to run the algorithm effectively.

According to the requirement in Section \ref{sec.rectified}, we need to ensure $(\bX_0,\bM_0) \ind \bX_1$.   
% as we need $\bX_1 \ind \bM_0$ for \eqref{eq.dis.eq} and $\bX_0 \ind \bX_1$ for the rectified flow.  
% The independence can be achieved by splitting the entire dataset into two subsets and drawing $(\bX_0,\bM_0) $ and $(\bX_1, \bM_1)$ separately from each one of them. 
% However, 
% % the essential requirement in our proof is $\bX_1 \ind \bM_0$ in \eqref{eq.dis.eq}, and 
% $\bM_1$ is never used in our algorithm, 
In practice, we propose to sample a batch $\{\bX_{0,j}, \bM_{0, j}\}_{i=1}^B$ from the paired dataset and 
sample a batch $\{\bX_{1,j}\}_{i=1}^B$ from a \emph{shuffled set} of $\{\bX_{0,j}\}$ to weaken their dependency.  Although, in this case, $\bX_1$ is still not independent of $\bX_0$ and $\bM_0$, we observe that this strategy works well in practice.

\section{MIRI and Other Imputation Algorithms}

In this section, we show how MIRI is related to other existing imputation algorithms. 

\subsection{GAIN}
\label{sec.gain.As.mi}

% In fact, we can also see that 
% GAIN is an approximate implementation of MIRI. 
Let $q(\bx, \bm)$ be the density function of the joint distribution of $(\bX^{(t-1)}, \bM)$. 
Suppose the discriminator is well-trained in the first step in the GAIN algorithm, i.e., $f( m_j, \bx, \bm_{-j}) \approx q( m_j | \bx, \bm_{-j})$, 
% where $p_\bg$ is the probability mass function of $M_j \mid \bX^{(t)}, \bM{-j}$. 
then the generator training in the second step (without the reconstruction error) is 
\begin{align}
    \bg_t = \arg\min_{\bg} \E\left[ \sum_j \log q(M_j | \bX_\bg^{(t)}, \bM_{-j}) \right] = \arg\min_{\bg} \E\left[ \log \prod_j q(M_j | \bX_\bg^{(t)}, \bM_{-j}).  \right]
\end{align}
Using the pseudo-likelihood approximation \citep{Besag1975}, $\prod_j q(M_j | \bX_\bg^{(t)}, \bM_{-j}) \approx q(\bM | \bX_\bg^{(t)})$, thus: 
\begin{align}
    \bg_t \approx \arg\min_{\bg} \E\left[ \log q(\bM | \bX_\bg^{(t)}) \right]  
    & = \arg\min_{\bg} \E\left[ \log \frac{q(\bM, \bX_\bg^{(t)})}{q(\bX_\bg^{(t)})p(\bM)} \right]. \label{eq.gain.mi.iner}
    % & = 
    % \arg\min_{\bg} \KLgtm. \notag 
\end{align}
By Gibbs' inequality, 
\begin{align}\E\left[ \log \frac{q(\bM, \bX_\bg^{(t)})}{q(\bX_\bg^{(t)})p(\bM)} \right] \le \E\left[ \log \frac{p_\bg(\bM, \bX_\bg^{(t)})}{q(\bX_\bg^{(t)})p(\bM)} \right] = \KLgtm, \end{align}
thus, we can see that GAIN learns its imputer $\bg_t$ by approximately minimizing \emph{a lower bound} of $\KLgtm$. 

There are two approximations made in the above argument: the pseudo-likelihood approximation and the Gibbs' inequality. First, since GAIN only performs one gradient step update to the imputer $\bg_t$ at each iteration, the gap between $\KLgtm$ and its lower bound may not be large. 
Second, the pseudo-likelihood approximation can be a good approximation, for example, when the correlation among $M_j$ is not too strong. The discriminator training of GAIN resembles a dimension-wise strategy that has been widely used in Ising model estimation \citep{Ravikumar2010,Meinshausen2006}. Note that the optimization in \eqref{eq.gain.mi.iner} is fundamentally intractable due to the high-dimensional density $q(\bm | \bx^{(t)}(\bg))$. $\bm$ is a binary variable defined on $\{0, 1\}^d$, thus, the density does not have a tractable normalizing constant. This further restricts us to the  dimension-wise pseudo-likelihood approach in GAIN algorithm.

% minimizes the mutual information between $\bX^{(t)}$ and $\bM$. 

\subsection{Round-Robin Approaches: MICE, HyperImpute, etc.}
\label{sec:round-robin}
Recall that Proposition \ref{eq.kl.decompose} states that minimizing the KL divergence $\KLgtm$ is equivalent to choosing a $\bg$ such that $p_\bg( \bx_{1-\bm} | \bx_{\bm}, \bm) = q( \bx_{1-\bm} | \bx_{\bm}) $, where $p_\bg(\bx, \bm)$ is the density of $(\bX^{(t)}(\bg), \bM)$ and $q(\bx)$ is the density of $\bX^{(t-1)}$. 
% One way to minimize this divergence is 
% simply letting \begin{align}\bX_{\bg, \bm}^{(t)} \mid \left\{ \bX_{1-\bm}^{(t-1)} = \bx_{1-\bm}, \bM = \bm \right\} \stackrel{d}{=}
%  \bX_\bm \mid \bX_{1-\bm}^{(t-1)}\end{align} and then impute the missing components. 
% to first learn $q$ from $\bX^{(t-1)}$ then 
One can simply build an imputer that samples from $q(\bx_{1-\bm} | \bx_{\bm})$ to impute the missing components. 
Then, by construction, $p_\bg( \bx_{1-\bm} | \bx_{\bm}, \bm) = q( \bx_{1-\bm} | \bx_{\bm}).$ 
% This imputing by constructing a sampler approach is a widely used strategy of many imputation algorithms as we will see. 

This algorithm has two issues: First, $\bm$ changes for every sample, so we need to learn a different imputer for every sample. Second, sampling from the conditional probability distribution $q( \bx_{1-\bm} | \bx_{\bm})$ is not trivial as the dimension of $\bx_{1-\bm}$ can still be high. 

In principle, we can learn the \textit{joint probability} $q$ then sample from the conditional distributions $q(\bx_{1-\bm} | \bx_{\bm})$ for any fixed $\bm$ (see the next section). 
However, learning a high dimensional joint probability density function $q$ is not trivial.  
Thus, we adopt the pseudo-likelihood approximation again: factorize $q(\bx)$ over dimension-wise conditional probabilities and learn $q(x_j | \bx_{-j})$ for every $j$. This dimension-wise conditional probability learning scheme
not only makes the learning of the joint distribution easier but
also solves the second issue as we can perform \emph{Gibbs sampling}, one missing component at a time. 
This pseudo-likelihood approximation + Gibbs sampling strategy gives rise to the round-robin algorithms such as MICE and HyperImpute. 

Let $f_j$ be the density model of $q(x_j | \bx_{-j})$, then the ``MIRI MICE'' performs the following algorithm. For $ t = 1, \ldots, T$:
\begin{enumerate}
    \item $\forall j$, learn $f_j$ to approximate $q(x_j | \bx_{-j})$. 
    \item Let $\bg_t$ be a Gibbs sampler for $q(\bx_{1-\bm} | \bx_{\bm})$ using $\{f_j\}$.
    \item $\bX^{(t)} \stackrel{\text{impute}}{\leftarrow}  \bX^{(t)}(\bg_t),  
    t \leftarrow t+1$.
\end{enumerate}
In practice, MICE interlaces first step and the later two steps, meaning a Gibbs sampling step and imputation is immediately performed after learning each $f_j$. However, the sequential nature of Gibbs sampling prevent us from parallelizing this procedure for high-dimensional samples. 

% \red{or learn the joint distribution using diffusionm, then sample conditionally. Zhang, Fang and Yu. why this is not ideal? add Uehara et al., 2020. }

% Notice that the KL in MIRI can also be written as 
% \begin{align}
%     \KLgtm = \mathbb{E}\left[\mathrm{D}\left[\Prob_{\bX_\bg^{(t)} \mid \bM} |  \Prob_{\bX^{(t-1)}}\right]\right],
% \end{align}

\subsection{Contemporary Approaches: Learn Jointly, Sample/Inpaint Conditionally}
\label{sec.contem}
% Rather than using the pseduo likelihood approximation and gibbs sampling. 
% In the previous section, we mentioned that to learn an imputer that samples from the $q(\bx_{1-\bm}| \bx_{\bm})$, we could learn a joint distribution $q(\bx)$, then sample conditionally. 
Traditionally, learning a high-dimensional joint distribution $q(\bx)$ is intractable due to its normalising constant. Recently, it was realized that one can learn a joint model $q(\bx)$ without dealing with the normalising constant.
% with Score Matching (SM) \citep{Hyvaerinen2005} or Noise Contrastive Estimation (NCE) \citep{gutmann10a}. 
\citep{uehara20b} first estimates the joint model $q$ via Score Matching \citep{Hyvaerinen2005} or Noise Contrastive Estimation \citep{gutmann10a} , then sample from $q(\bx_{1-\bm}| \bx_{\bm})$ via importance sampling. This method works well when $\bx_{1-\bm}$ is in a relatively low-dimensional space. 
\citep{chen2024rethinking} noticed that given a joint model $q$, this conditional sampling problem could be solved using Stein Variational Gradient Descent (SVGD) \citep{LiuQ2016SVGD}.  
% and proposed gradient flow-based approach to draw samples from the conditional distribution.
However, SVGD utilizes properties of RKHS function, and require kernel computations. Therefore, does not scale to high-dimensional datasets. 

In recent years, it is also realized that one can ``inpaint'' samples given a diffusion model trained on the joint samples \citep{Lugmayr_2022_CVPR}. 
% distribution $q(\bx)$ directly using generative models without learning a joint density model first. 
This idea gives rise to DiffPuter \citep{zhang2025diffputer}, an imputer that trains a joint diffusion model to sample from $q$. Then use an inpainter to impute missing values given the observed vector $\bx_{\bm}$. However, an inpainter is not a standard backward process of diffusion dynamics, thus in general, it is not guaranteed to generate samples from $q(\bx_{1-\bm}| \bx_{\bm})$. 
% converting a joint diffusion model to a conditional sampler is not trivial and the conditional sample obtained in this manner is not guaranteed to be samples from $q(\bx_{1-\bm}| \bx_{\bm})$. 

\section{Experiments}

This section evaluates MIRI's performance on a variety of imputation tasks. Details on experimental settings and implementations can be found in Appendix \ref{sec:exp.setup}. Additional results can be found in Appendix \ref{sec:additional:EXP}. 
To ensure fair comparisons, we match model architectures across all methods wherever possible. For example, if our method uses CNN, we use the same architectures for other neural-network-based methods as well. 
% We invite reviewers to reproduce our results using provided code in the supplementary material.

% Metrics, why MMD (Reviewer iwML)
Imputation methods aim to either predict the conditional mean $\mathbb{E}[\bX_{1-\bm} | \bX_\bm]$, evaluated by metrics like MAE/RMSE, or learn to sample from the  conditional distribution $\mathbb{P}_{\bX_{1-\bm} | \bX_\bm}$, evaluated by the distributional fidelity. Our method, MIRI, is designed for the latter. Thus, we mostly evaluate the performance by using metrics that measure the distributional fidelity (such as MMD \citep{gretton2012kernel}). 
% and we measure its performance via the Maximum Mean Discrepancy (MMD) \citep{gretton2012kernel}.

% \red{tallk about WGF impute}
\begin{figure}[t]
    %\centering
     %\subfigure[Truth ($\bX^*$)]{
    %     \includegraphics[width=0.16\textwidth]{./Imgs/toy/star.png}
    % }
    \subfigure[MIRI]{
        \includegraphics[width=0.21\textwidth]{./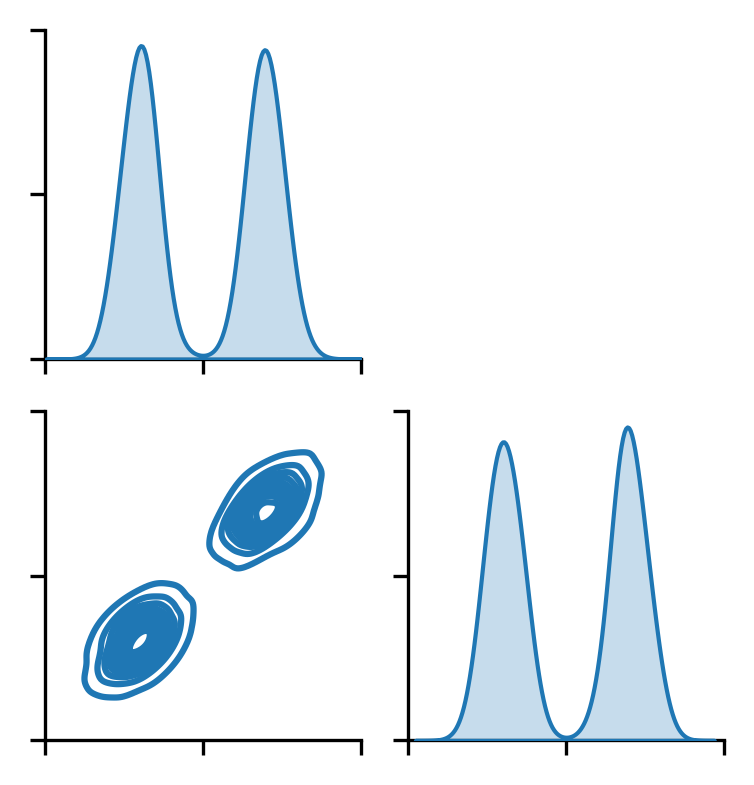}
    }
    \subfigure[HyperImpute]{
        \includegraphics[width=0.21\textwidth]{./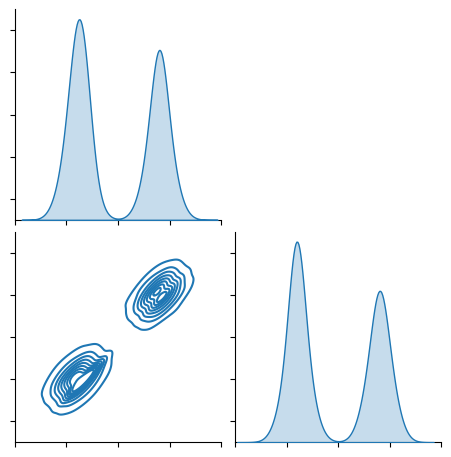}
    }
    \subfigure[KnewImp]{
        \includegraphics[width=0.21\textwidth]{./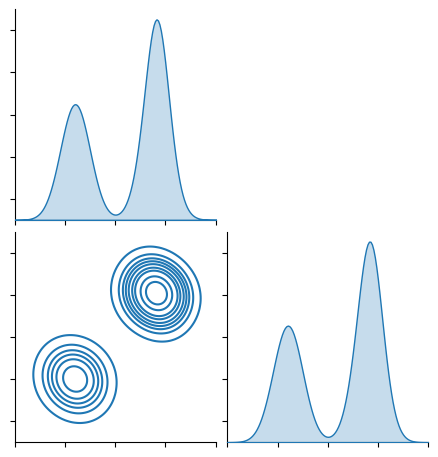}
    }
    \subfigure[MMD and MI]{
        \includegraphics[width=0.28\textwidth]{./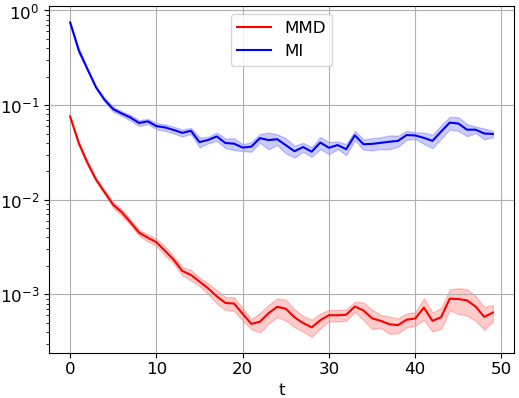}
        \label{fig:toy_mi}
    }
    \caption{Pairwise density plots of the imputed data and MMD/MI of MIRI.}
    \label{fig:toy}
    % \subfigure[Sinkhorn]{
    %     \includegraphics[width=0.18\textwidth]{toy_sinkhorn.png}
    % }
\end{figure}

\begin{figure}[t]
    \center
    \includegraphics[width=.9\textwidth]{./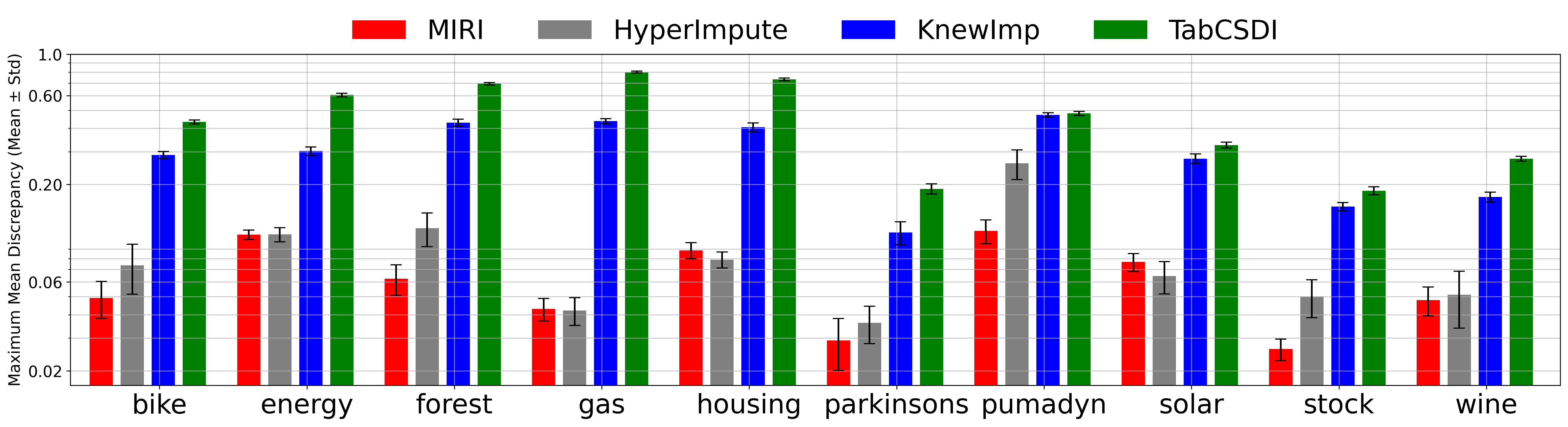}
    \caption{MMD on UCI datasets with 60\% data missing. The lower the better.}
    \label{fig:uci}
\end{figure}

\subsection{Toy Example: Assessing Imputation Quality Using MMD and Mutual Information}

In this section, we showcase the performance of the proposed MIRI imputer on a toy example that is a mixture of 2-dimensional Gaussian: $\bX^* \sim 0.5\mathcal{N}([-2,-2], 0.5^2 \mathbf{I}) + 0.5\mathcal{N}([2,2], 0.5^2 \mathbf{I})$. The missingness mask is generated as $\bM \sim \prod_{j=1}^{d} \text{Bernoulli}_{M_j}(0.7)$, i.e., 30\% of the data is Missing Completely at Random.  We draw 6000 i.i.d. samples from $\Prob_{\bX^*}$ and $\Prob_\bM$ to create missing observations. 
The imputed samples are visualized in the pairwise density plots in Figure \ref{fig:toy}. We choose HyperImpute \citep{Jarrett2022HyperImpute} and KnewImp \citep{chen2024rethinking} as comparisons.  
It can be seen that MIRI not only captures the bimodal structure of the true data but also preserves the proportion of the two modes. On the other hand, HyperImpute and KnewImp fail to recover the correct proportion of the two modes. 
% KnewImp fails to capture the clearly separated bimodal structure with the imputed data.  
We also show performance metrics (MAE, RMSE, MMD) with various other imputation algorithms (such as MissDiff, GAIN, TabCSDI \citep{zheng_diffusion_2023}) in Table \ref{table:toy} in the Appendix \ref{sec.toy.perf.metrics}. 
% \begin{wrapfigure}{r}{0.35\textwidth}
%     \centering
%     % \vspace*{-2mm}
%     \includegraphics[width=.3\textwidth]{toy_mi_rectified.png}
%     \vspace*{-3mm}
%     \caption{MMD and MI of MIRI.}
%     \label{fig:toy_mi}
% \end{wrapfigure}
Finally, in Figure \ref{fig:toy_mi}, we show that both MIRI's MMD and MI decrease over iterations, proving that MI between the imputed data and the missingness mask is a good criterion for assessing how well the imputed data recovers $\Prob_{\bX^*}$.

% \begin{table}[t]
%     \centering
%     \begin{tabular}{|c|c|c|c|c|c|}
%     \hline
%     & Truth & MIRI  & HyperImpute & GAIN & Sinkhorn  \\ \hline
%     MMD & 0.000 & \textbf{0.001} & 0.051 & 0.138 & 0.124  \\ \hline
%     MI & -0.004 & \textbf{0.074} & 0.682 & 1.268 & 1.244 \\ \hline
%     MAE & 0 & 1.011 & \textbf{0.929} & 1.263 & 1.154 \\ \hline
%     MSE & 0 & 3.348 & 3.258 & 2.921 & \textbf{2.406} \\ \hline
%     \end{tabular}
%     \caption{Maximum Mean Discrepancy (MMD) between the true data distribution and the imputed data distribution. Mutual Information (MI) between the imputed data and the missingness mask. MI is approxiamted using the Donsker and Varadhan lower bound, hence the negative value.}
% \end{table}
% \begin{figure}[t]
%     \includegraphics[width=0.25\textwidth]{toy_mi_rectified.png}
% \end{figure}

\subsection{Real-World Tabular Imputation: UCI Regression Benchmarks}

\begin{wraptable}[11]{r}{0.42\linewidth} % use \linewidth in two-column docs
\vspace{-1.8\baselineskip}           % nudge upward to sit closer under the figure
\centering
\caption{\textbf{Aggregated MMD rankings for UCI (40\% missingness) across 10 datasets.} Lower is better.}
\vspace{0.5em}
\label{tab:uci_mmd_rank_40}
\resizebox{\linewidth}{!}{%
\begin{tabular}{lccc}
\toprule
\textbf{Method} & \textbf{MCAR} & \textbf{MAR} & \textbf{MNAR} \\
\midrule
TabCSDI \citep{zheng_diffusion_2023}     & 6.4 & 6.4 & 6.1 \\
GAIN \citep{yoon2018gain}        & 6.2 & 6.2 & 5.9 \\
TDM \cite{zhao2023transformed}    & 4.5 &  4.4 & 4.2 \\
KnewImp \citep{chen2024rethinking}    & 4.1 & 4.2 & 3.8 \\
MIWAE \citep{mattei2019miwae}    & 3.0 & 2.9 & 2.8 \\
HyperImpute \citep{Jarrett2022HyperImpute} & 1.6 & \textbf{1.5} & 1.6 \\
\midrule
\rowcolor[gray]{0.9}
\textbf{MIRI (Ours*)}& \textbf{1.4} & 2.0 &  \textbf{1.3}\\
\bottomrule
\end{tabular}
}
\end{wraptable}

\begin{table}[t]
    \centering
    \caption{{\bf Quantitative results on CIFAR-10.} Methods are evaluated at three levels of missingness (20\%, 40\%, 60\%) using FID, PSNR, and SSIM. The best results are highlighted in bold.}
    \label{tab:cifar_metrics}
    \resizebox{1.0\linewidth}{!}{
    \begin{tabular}{lcccccccccc}
    \toprule
    & \multicolumn{3}{c}{20\% Missingness} & \multicolumn{3}{c}{40\% Missingness} & \multicolumn{3}{c}{60\% Missingness} \\
    \cmidrule(lr){2-4} \cmidrule(lr){5-7} \cmidrule(lr){8-10}
    \textbf{Method} & FID ($\downarrow$) & PSNR ($\uparrow$) & SSIM ($\uparrow$) 
                   & FID ($\downarrow$) & PSNR ($\uparrow$) & SSIM ($\uparrow$) 
                   & FID ($\downarrow$) & PSNR ($\uparrow$) & SSIM ($\uparrow$) \\
    \midrule
    GAIN  \citep{yoon2018gain}      & 164.11 & 21.21 & 0.7803 
                & 281.62 & 16.20 & 0.5576 
                & 285.53 & 11.99 & 0.2933 \\
    \cmidrule(lr){1-1} \cmidrule(lr){2-4} \cmidrule(lr){5-7} \cmidrule(lr){8-10}
    KnewImp \citep{chen2024rethinking}    & 153.09 & 18.84 & 0.6463 
                & 193.68 & 15.81 & 0.4740 
                & 264.40 & 14.04 & 0.3317 \\
    \cmidrule(lr){1-1} \cmidrule(lr){2-4} \cmidrule(lr){5-7} \cmidrule(lr){8-10}
    MissDiff  \citep{ouyang_missdiff_2023}  & 90.51 & 22.29 & 0.7702 
                & 129.84 & 19.65 & 0.6648 
                & 197.91 & 16.78 & 0.4989 \\
    \cmidrule(lr){1-1} \cmidrule(lr){2-4} \cmidrule(lr){5-7} \cmidrule(lr){8-10}
    HyperImpute \citep{Jarrett2022HyperImpute} & 8.92 & \textbf{34.09} & \textbf{0.9750} 
                & 65.01 & 23.22 & 0.7931 
                & 130.36 & 20.17 & 0.6533 \\
    \midrule
    \rowcolor[gray]{0.9}
    \textbf{MIRI (Ours*)} & \textbf{6.01}   & 32.29  & 0.9736 
                          & \textbf{27.53}  & \textbf{27.14}  & \textbf{0.9126} 
                          & \textbf{68.58}  & \textbf{23.22}  & \textbf{0.8063} \\
    \bottomrule
    \end{tabular}
}
\end{table}

We evaluate MIRI on 10 standard UCI regression benchmarks \citep{kelly2019}. We select these datasets for their diversity in sample size and dimensionality (see Table \ref{tab:uci} in Appendix \ref{sec:datasets}). 
Following \citep{muzellec2020missing}, we simulate missingness under MCAR/MAR/MNAR mechanisms.
We simulate 20\%, 40\%, and 60\% missingness for MCAR and MNAR, and 40\% and 80\% for MAR.
For the MAR mechanism, the missingness mask is conditioned on a randomly selected 50\% of the variables.
In the main paper, we compare our method with HyperImpute, KnewImp, TabCSDI, GAIN, MIWAE \citep{mattei2019miwae}, and TDM \citep{zhao2023transformed}. 

Figure \ref{fig:uci} reports the average MMD over 10 trials (error bars denote one standard deviation). As shown, even in the 60\% missing data cases, MIRI outperforms or performs comparably to all competitors. 
We aggregate performance using mean rank across 10 datasets. As shown in Table \ref{tab:uci_mmd_rank_40} for the 40\% missingness setting, MIRI attains the best mean rank under MCAR and MNAR settings and is highly competitive under MAR setting.

Comprehensive results, including performance across all specified missingness levels and comparisons with an extended set of baseline methods, are deferred to Appendix~\ref{sec:uci-add}. 

% Results for the 20\% and 40\% missingness scenarios, as well as additional comparisons, are provided in Appendix \ref{sec:uci-add}. 
% the second best performing imputer in all datasets. 
% In the 20\% missing data scenario, MIRI outperforms or is comparable to all competitions on 8 out of 10 datasets. 

\subsection{Real-World Image Imputation: CIFAR-10 and CelebA Benchmarks}
\begin{figure}[t]
    \centering
    \subfigure[15 uncurated 32$\times$32 CIFAR-10 images and their imputations. Pixels are removed from \emph{all RGB channels}.]{
        \includegraphics[width=0.999\textwidth]{./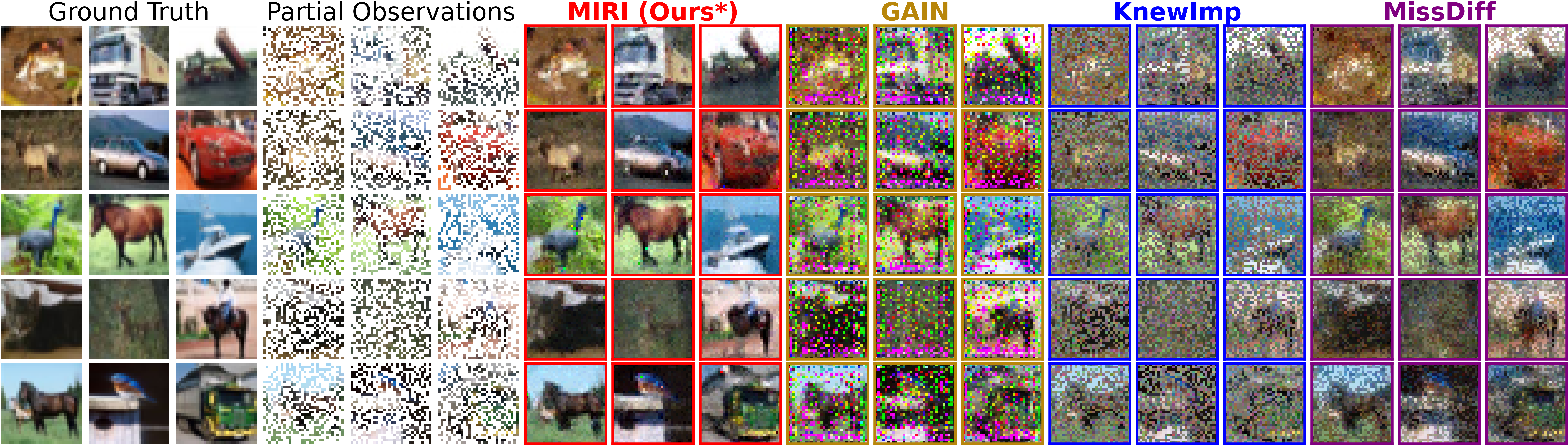}
        \label{fig.cifar}
    }\\[2mm]
    \subfigure[15 uncurated 64$\times$64 CelebA images and their imputations. Pixels are removed from \emph{each RGB channel independently}.]{
        \includegraphics[width=0.999\textwidth]{./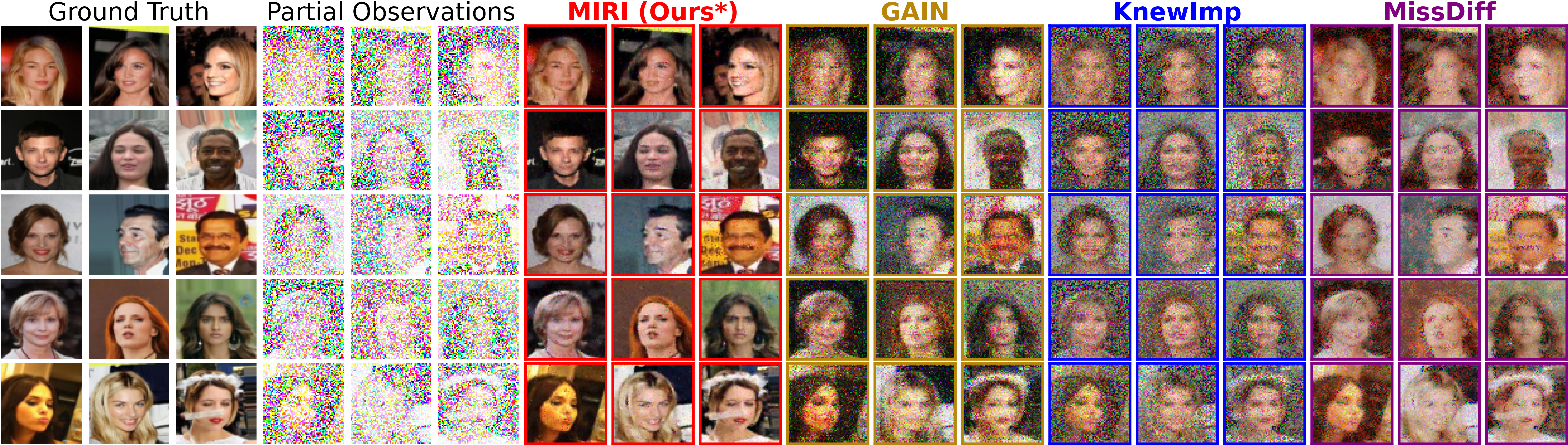}
        \label{fig.celeba}
    }
    \caption{Comparison of imputed samples on CIFAR-10 and CelebA with 60\% of missingness.}
    \label{fig:main_imputations}
\end{figure}

\begin{wraptable}[10]{r}{0.42\linewidth} % use \linewidth in two-column docs
\vspace{-1.8\baselineskip}           % nudge upward to sit closer under the figure
\centering
\caption{\textbf{Accuracy of 10-class CIFAR-10 classification on imputed data at varying missingness.} Higher is better.}
\vspace{0.5em}
\label{tab:cifar10_cls_missing}
\resizebox{\linewidth}{!}{%
\begin{tabular}{lccc}
\toprule
\textbf{Method} & \textbf{20\%} & \textbf{40\%} & \textbf{60\%} \\
\midrule
KnewImp \citep{chen2024rethinking}         & 0.267 & 0.137 & 0.106 \\
GAIN \citep{yoon2018gain}           & 0.337 & 0.133 & 0.096 \\
MissDiff \citep{ouyang_missdiff_2023}       & 0.486 & 0.269 & 0.152 \\
HyperImpute \citep{Jarrett2022HyperImpute}    & 0.804 & 0.405 & 0.212 \\
\midrule
\rowcolor[gray]{0.9}
\textbf{MIRI (Ours*)} & \textbf{0.812} & \textbf{0.525} & \textbf{0.364} \\
\bottomrule
\end{tabular}
}
\end{wraptable}

We assess MIRI on high-dimensional image datasets.

First, on CIFAR-10 \citep{krizhevsky_cifar-10_2009} (32$\times$32 RGB), we train using only 5\,000 samples (<10\% of the full set) with 60\% of pixels randomly removed (all RGB channels jointly). Figure \ref{fig.cifar} compares MIRI's 60\% missingness reconstructions against generative approaches (GAIN, KnewImp, MissDiff). Table \ref{tab:cifar_metrics} reports FID, PSNR, and SSIM at 20\%, 40\%, and 60\% missingness, showing MIRI strongly outperforms the best generative alternatives across this range, even in challenging cases.

See Appendix \ref{sec:cifar-add} for additional results, and Table \ref{tab:cifar10_cls_missing} for downstream classification results.

Second, we test MIRI on higher-resolution CelebA \citep{liu2015faceattributes} (64$\times$64 RGB) with 5\,000 samples, using a mechanism that masks each RGB channel independently. Figure \ref{fig.celeba} again substantiates MIRI's strong imputation performance.

\section{Limitations and Future Works}

Despite the competitive results, MIRI has several limitations that open avenues for future work.

A primary limitation is computational intensity. Each MIRI iteration requires running a full rectified flow algorithm, including training a velocity field and solving the corresponding ODE. One interesting future work is to incorporate recent distillation methods to accelerate sample generation. 

Our Theorem \ref{thm:main} is a population-level result, meaning it assumes the optimal velocity field $\bv^*$ is obtained and the imputation ODE is solved exactly.
It does not consider errors due to finite sample approximation, optimisation and inaccurate ODE solver. Extending our theoretical results to the finite-sample optimal $\hat{\bv}$ and the approximate imputation ODE solver (like the Euler method) is another avenue for future works.

A more fundamental limitation is that our objective, which minimizes the mutual information between the data and the mask, is not designed for Missing Not at Random (MNAR) scenarios where the missingness mechanism depends on the true values. This is a common challenge for many modern imputation frameworks, and extending our method to handle such cases is a crucial direction.

Finally, MIRI could be extended to a self-supervised framework for time-series. By training on artificially masked, fully-observed sequences, we could learn a general-purpose imputation vector field. The key advantage here is flexibility: because MIRI's vector fields are agnostic to missingness patterns, this approach could yield a single, pre-trained foundation model for robust, general-purpose time-series imputation and forecasting.

\section{Conclusion}

In this work, we introduced \textit{Mutual Information Reducing Iterations} (MIRI), a framework that reduces data-mask dependency by iteratively minimizing $\mathrm{D}[\mathbb{P}_{\bX^{(t)}(\bg),\bM}\Vert\mathbb{P}_{\bX^{(t-1)}}\otimes \mathbb{P}_{\bM}]$. We proved that MIRI decreases mutual information $\mathrm{I}(\bX;\bM)$. The optimal imputer is constructed by solving an ODE trained with a rectified flow objective. We also showed that existing imputation methods such as GAIN are approximate special cases of this approach. Empirically, MIRI also demonstrated strong distributional fidelity across synthetic, tabular, and image benchmarks.

% ==============================================================
\newpage
\section*{Acknowledgments}
We thank the four anonymous reviewers and the area chair for their insightful comments and suggestions.
SL thanks Josh Givens, Sam Power, Katarzyna Reluga and other colleagues at Bristol Mathematics for helpful discussions.
Partial funding for this work was provided by the University of Bristol School of Mathematics Summer Research Bursary 2024; JY was funded by the Alumni Foundation, and LW by the Heilbronn Institute for Mathematical Research.
JY also acknowledges support from the Scholar Award and the ED Davies Travel Fund, provided by the Neural Information Processing Foundation and the Fitzwilliam College, University of Cambridge, respectively.
Partial computational resources were provided by the Advanced Computing Research Centre at the University of Bristol (\url{https://www.bristol.ac.uk/acrc/}).

% \clearpage
{
\small
\bibliographystyle{abbrv}
\bibliography{refs}
}

\newpage

\appendix

\section{Illustration of Sequential Imputation}
\label{sec:seq.imp}

\begin{figure}[H]
    \centering 
    \includegraphics[width=.95\textwidth]{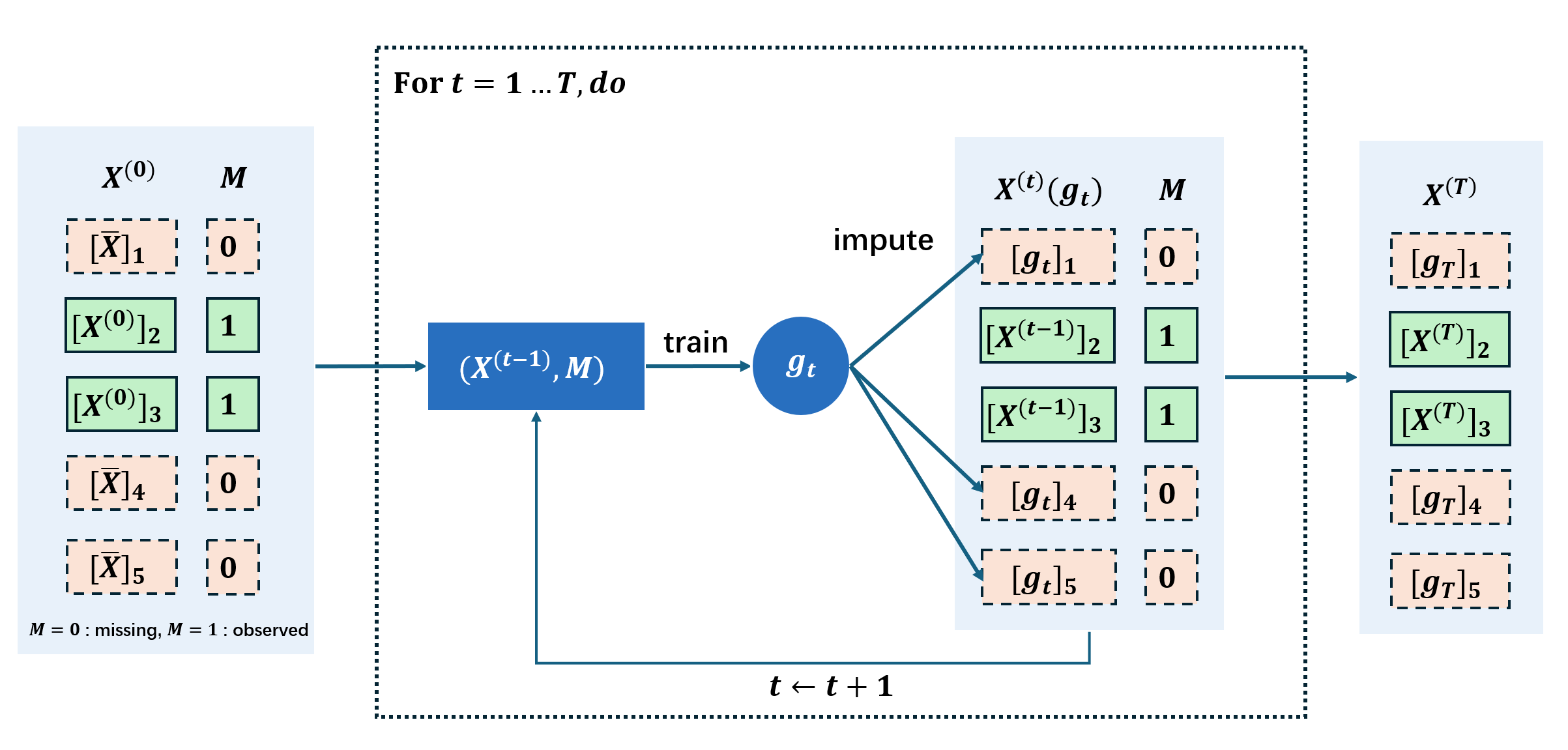}
    \caption{Schematic of the sequential imputation algorithm. The process begins with an initial imputation, $\bX^{(0)}$, and iteratively refines the missing values over $T$ steps.} 
    \label{fig.seq.impute}
\end{figure}

Figure~\ref{fig.seq.impute} illustrates the sequential imputation framework, where a model is iteratively trained to refine estimates for missing data.

\section{Proof of Proposition \ref{prop.miri}}
\label{sec.proof.miri}
\begin{proof}
    Consider the difference between the mutual information at iteration $t$ and $t-1$: 
\begin{align}
\label{eq.mi.diff}
    & \KLt  - \KLtm \notag \\
    = & \underbrace{\KLt - \KLttm}_{A} + \underbrace{\red{\KLttm} - \green{\KLtm}}_{B}. 
\end{align}

Since $\KLgtm$ is minimized at $\bg_t$ and $\bX^{(t)} = \bX^{(t)}(\bg_t)$ by definition,
\begin{align}\red{\KLttm} \le \KLgtm, \forall \bg.\end{align} Moreover, $\green{\KLtm} = \KLgtm|_{\bg = \mathrm{Identity}}$, so $B$ is non-positive.
To show $A$ is non-positive, we write out the KL-divergences in $A$. 
Let $p(\bx, \bm)$ be the density of $(\bX^{(t)}, \bM)$ and $q(\bx)$ be the density of $\bX^{(t-1)}$.
\begin{align}
    &\KLt - \KLttm \\
    = &\E\left[\log \frac{p(\bX^{(t)}, \bM)}{p(\bX^{(t)})p(\bM)}\right] - \E\left[\log \frac{p(\bX^{(t)}, \bM)}{q(\bX^{(t)})p(\bM)}\right] \\
    = & - \E \left[ \log p(\bX^{(t)})\right] + \E \left[ \log q(\bX^{(t)}) \right] 
    % =& \int p(\bx, \bm) \log \frac{\PXMt}{\PXPMt} - \int \PXMt \log \frac{\PXMt}{\PXPMtm} \\
    % =& \int \PXMt \log \PXtm - \int \PXMt \log \PXt \\ 
    % =& \int \PXt \log \PXtm - \int \PXt \log \PXt 
    \le 0,
\end{align}
where the last inequality is due to the Gibbs inequality.

Thus, $A+B \le 0$ as desired. 
\end{proof}

\section{Proof of Theorem \ref{thm:main}}
\label{thm.proof.main}

\begin{proof}
% Our proof hinges on the two following lemmas. 
% Since $\bg(\bZ_0; \bX_{1-\bm}^{(t-1)}, \bM)$ is a deterministic transform of the random variable $\bX_\bm^{(t-1)} | \bX_{1-\bm}^{(t-1)}, \bM$, 
% that we can write down KL divergence $\KLgtm$, up to a constant, as the expected KL divergence between the density of the imputed components given the non-missing components and the missing mask, and the density of the missing components given the non-missing coordinates and the missing mask.

% Due to Proposition \ref{eq.kl.decompose}
% we only need to show that $p_{\bg^*}( \bx_\bm | \bx_{1-\bm}, \bm) = q( \bx_\bm | \bx_{1-\bm})$. 

First, we introduce the following lemma, based on the marginal preserving property of rectified flow.  
% Notice that 0 is the smallest value of the first term and is the only term depends on $\bg$, so we have shown that $\bg^* \in \arg\min_{\bg} \KLgtm$.

% Now we show that when $\bZ_0 = \bX_{0, \bm} | \left\{ \bX_{0, 1-\bm} = \bx_{1-\bm}, \bM = \bm \right\}$, the marginal law of the $\bZ_t$ equals that of $\bX_{\tau,\bm} | \left\{ \bX_{\tau, 1-\bm} = \bx_{1-\bm}, \bM = \bm \right\}$. 
% Let 
% $\bX_{\tau,\bm}$ indicates all ``missing dimensions'' of $\bX_\tau$ and $\bX_{\tau,1-\bm}$ indicates ``non-missing dimensions'' of $\bX_\tau$.
% We show that the solution of the above ODE with the initial condition $\tilde{\bx}_{\tau, \bm} \sim \bX_{0, \bm} | \left\{ \bX_{0, 1-\bm} = \bx_{1-\bm}, \bM = \bm \right\}$ has a marginal law $\Prob_{\bX^{(t-1)}}$ at $\tau = 1$. 
% To define the marginal distribution of the above ODE, we need to first define a 
\begin{lemma}
\label{prop.marginal}
    The imputation ODE with the initial condition given in \eqref{eq.init.cond}, i.e.,
    \begin{align}
        \bZ_0 \stackrel{d}{=} \bX_{0, 1-\bm} | \left\{ \bX_{0, \bm} = \bx_{\bm}, \bM_0 = \bm \right\}
    \end{align}
    has a solution at the terminal time $\tau = 1$, 
    $
        %  \bX_{\tau,\bm} | \left\{ \bX_{\tau, 1-\bm} = \bx_{1-\bm}, \bM_0 = \bm \right\}, i.e., 
        \bZ_{1} \stackrel{d}{=} \bX_{1, 1-\bm} | \left\{\bX_{1, \bm} = \bx_{\bm}\right\}, 
    $ for every fixed $\bx, \bm$.
\end{lemma}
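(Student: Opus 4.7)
The plan is to reduce Lemma~\ref{prop.marginal} to the standard marginal-preserving property of rectified flow reviewed in Section~\ref{sec.flow}, applied to an appropriate \emph{conditional} sub-problem that lives on the missing coordinates alone. First I would write the pointwise minimiser of \eqref{eq.rectified}: since the objective is a conditional mean-squared-error loss, standard orthogonal-projection reasoning gives
\begin{align*}
\bv^*(\bz_{1-\bm},\bx_{0,\bm},\bx_{1,\bm},\bm,\tau) = \E\bigl[\bX_1 - \bX_0 \bigm| \bX_{\tau,1-\bM_0}=\bz_{1-\bm},\, \bX_{0,\bM_0}=\bx_{0,\bm},\, \bX_{1,\bM_0}=\bx_{1,\bm},\, \bM_0=\bm\bigr].
\end{align*}
I would then specialise this at the arguments that actually appear inside \eqref{eq:impute.ode}, namely $\bx_{0,\bm}=\bx_{1,\bm}=\bx_\bm$ and $\bM_0=\bm$. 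On this conditioning event $\bX_{1,\bm}-\bX_{0,\bm}=0$, so the observed-coordinate block of $\bv^*$ vanishes automatically; the multiplier $(1-\bm)\odot$ in the ODE is then merely bookkeeping that restricts the dynamics to the missing coordinates.

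The second step is to recognise that the resulting dynamics on the missing block are exactly a classical rectified flow. Define
\begin{align*}
\bY_0 := \bX_{0,1-\bm} \mid \{\bX_{0,\bm}=\bx_\bm,\bM_0=\bm\}, \qquad \bY_1 := \bX_{1,1-\bm} \mid \{\bX_{1,\bm}=\bx_\bm\}.
\end{align*}
Since $\bX_0,\bX_1 \sim \PXtm$ marginally, and MCAR gives $\bX_0 \ind \bM_0$, both $\bY_0$ and $\bY_1$ have law $\mu_{\bx_\bm} := \mathbb{P}_{\bX^{(t-1)}_{1-\bm}\mid\bX^{(t-1)}_\bm=\bx_\bm}$, and the assumption $\bX_1 \ind (\bX_0,\bM_0)$ makes them independent on the conditioning event. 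Consequently $\bY_\tau := (1-\tau)\bY_0+\tau\bY_1$ coincides in law with $\bX_{\tau,1-\bm}$ on this event, and the restriction of $\bv^*$ computed in the first step is exactly $\E[\bY_1-\bY_0 \mid \bY_\tau = \cdot\,]$---the standard rectified-flow velocity for the pair $(\bY_0,\bY_1)$.

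With this reduction in hand, the third step is to invoke the classical marginal-preserving theorem for rectified flow: initialising $\bZ_0 \eqd \bY_0$, which is precisely the condition \eqref{eq.init.cond}, forces $\bZ_\tau \eqd \bY_\tau$ for every $\tau \in [0,1]$. Reading this at $\tau = 1$ yields $\bZ_1 \eqd \bY_1$, whose law is $\mu_{\bx_\bm} = \mathbb{P}_{\bX_{1,1-\bm}\mid\bX_{1,\bm}=\bx_\bm}$, which is the claim.

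The main technical obstacle is the second step: justifying that the minimiser of the global objective \eqref{eq.rectified}, once its arguments are restricted to the slice $\{\bx_{0,\bm}=\bx_{1,\bm}=\bx_\bm,\bM_0=\bm\}$, really does agree with the rectified-flow velocity for the reduced sub-problem on the missing coordinates, rather than some spurious velocity that leaks information from the observed block. The justification rests on the product structure of the conditional joint law, which follows from $\bX_1 \ind (\bX_0,\bM_0)$ together with MCAR, and on the fact that \eqref{eq.rectified} already conditions $\bv$ on the observed coordinates explicitly, so nothing is smuggled in by the optimisation. Once this identification is secured, the remainder is a direct invocation of the continuity-equation argument of \citep{liu2023flow}.
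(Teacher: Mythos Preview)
Your overall strategy is sound and mirrors the paper's: identify the pointwise minimiser of \eqref{eq.rectified} as a conditional expectation, restrict to the slice $\{\bX_{0,\bm}=\bX_{1,\bm}=\bx_\bm,\,\bM_0=\bm\}$, and apply the marginal-preserving property of rectified flow on the missing coordinates. The paper does essentially the same thing, except that rather than delegating to the classical theorem it re-derives the continuity equation for the conditional density $p_\tau(\by\mid\bu,\bw,\bm)$ from scratch and then specialises $\bu=\bw=\bx_\bm$; your reduction to the known result is arguably cleaner.

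There is, however, a factual slip you should remove: you invoke MCAR twice, first to claim $\bX_0\ind\bM_0$ and then to justify the ``product structure'' on the slice. MCAR concerns the \emph{true} data $\bX^*$, not the imputed data $\bX^{(t-1)}$; since $(\bX_0,\bM_0)\sim\PXMtm$ is the joint law of $\bX^{(t-1)}$ and the mask, these are in general dependent---reducing that dependence is the whole point of MIRI. Consequently $\bY_0$ does \emph{not} have law $\mu_{\bx_\bm}$; its law is $\mathbb{P}_{\bX^{(t-1)}_{1-\bm}\mid\bX^{(t-1)}_\bm=\bx_\bm,\,\bM=\bm}$, which generally differs from that of $\bY_1$. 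Fortunately neither error matters for the argument: the rectified-flow marginal-preserving theorem transports between \emph{arbitrary} source and target, so your third step still yields $\bZ_1\eqd\bY_1$; and the only independence you actually need, $\bY_0\ind\bY_1$ on the slice, follows from $\bX_1\ind(\bX_0,\bM_0)$ alone, with no role for MCAR. Drop the MCAR invocations and the claim that the two $\bY$'s share a law, and the proof goes through.
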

See the Appendix \ref{sec.proof.marginal} for the proof. 

Now we derive the main result. 
Lemma \ref{prop.marginal} shows that under the initial condition,  the solution $\bZ_{1}$ is 
% that when $\tau = 1$,  
% $\bZ_{\tau = 1} \stackrel{d}{=}
% \bX_{1, \bm} | \left\{\bX_{0, 1-\bm} = \bx_{1-\bm}, \bX_{1, 1-\bm} = \bx_{1-\bm}, \bM_0 = \bm\right\}.
% $ Since $\bX_1$ is independent of $\bM_0$ by construction, we have 
\begin{align}
    \label{eq.dis.eq}
    {\color{red}\bZ_{1}} &\stackrel{d}{=} \bX_{1, 1-\bm} | \left\{\bX_{1, \bm} = \bx_{\bm} \right\}\notag \\
    & = \bX_{1-\bm}^{(t-1)} | \left\{\bX_{\bm}^{(t-1)} = \bx_{\bm} \right\}
\end{align}

% Due to the definition of $\bX_0, \bM_0$, the conditional probability 
% \begin{align}
% \Prob_{\bX^{(t)}_{\bg^*, \bm} | \bX_{0, 1-\bm} = \bx_{1-\bm}, \bM_0 = \bm}  = \Prob_{\bX^{(t)}_{\bg^*, \bm} | \bX^{(t-1)}_{1-\bm} = \bx_{1-\bm}, \bM = \bm}. 
% \end{align}
Moreover, $\bZ_1$ is also our imputer $\bg_t$, so by construction \eqref{eq:impute.ode}, 
% According to the construction of the imputer \eqref{eq:impute.ode}, 
% \begin{align}\left[\bg^*(\bX_\bm; \bx, \bm)\right]_\bm | \left\{\bX_{1-\bm} = \bx_{1-\bm}, \bM = \bm \right\} \stackrel{d}{=} \bZ_1.\end{align} 
\begin{align}
    \label{eq.xgm}
    {\color{red}\bZ_{1}} \stackrel{d}{=} \bX_{\bg^*, \bm}^{(t)} | \left\{\bX_{1-\bm}^{(t-1)} = \bx_{1-\bm}, \bM_0 = \bm \right\}. 
\end{align}
% the fact that $(\bX_0, \bM_0) \stackrel{d}{=} (\bX^{(t-1)}, \bM)$ and 

Combining \eqref{eq.xgm} and \eqref{eq.dis.eq}, we obtain the following equality 
% \begin{align}\Prob_{\bX^{(t)}_{\bg^*, \bm} | \bX_{0, 1-\bm} = \bx_{1-\bm}, \bM_0 = \bm} = \Prob_{\bX_{1, \bm} | \bX_{1, 1-\bm} = \bx_{1-\bm}}. \end{align}
% Since $(\bX_0, \bM_0) \stackrel{d}{=} (\bX^{(t-1)}, \bM)$ and $\bX_1 \stackrel{d}{=} \bX^{(t-1)}$ by the construction,
% % or equivalently, we have 
\begin{align}
\label{eq.thm1.eq2}
    {\color{blue}{\Prob_{\bX^{(t)}_{\bg^*, 1-\bm} | \bX^{(t-1)}_{\bm} = \bx_{\bm}, \bM = \bm}}} = 
    \Prob_{\bX^{(t-1)}_{1-\bm} | \bX^{(t-1)}_{\bm} = \bx_{\bm}}.
\end{align}
The imputation ODE does not change observed components where $m_j = 1$, so $\bX^{(t-1)}_{\bm} = \bX^{(t)}_{\bm}$. We have \begin{align}
\label{eq.thm1.eq3}
{\color{blue}\Prob_{\bX^{(t)}_{\bg^*, 1-\bm} | \bX^{(t-1)}_{\bm} = \bx_{\bm}, \bM = \bm}} = \Prob_{\bX^{(t)}_{\bg^*, 1-\bm} | \bX^{(t)}_{\bm}= \bx_{\bm}, \bM = \bm}.
\end{align} 
% Let $p_{\bg^*}(\bx_{\bm}, \bx_{1-\bm}, \bm)$ be the density of joint distribution of $(\bX^{(t)}_{\bg^*, \bm}, \bX^{(t-1)}_{1-\bm}, \bM)$ and $q(\bx_{ \bm}, \bx_{1-\bm})$ be the density of joint distribution of $(\bX^{(t-1)}_{\bm}, \bX^{(t-1)}_{1-\bm})$, 
\eqref{eq.thm1.eq2} and \eqref{eq.thm1.eq3} imply that 
% \begin{align}
    % \label{eq.pq}
$
    p_{\bg^*}(\cdot| \bx_{\bm}, \bm) = q(\cdot| \bx_{\bm}). 
$
% \end{align}
According to Proposition \ref{eq.kl.decompose}, 
we have shown that $\bg^*$ is an optimal imputer that minimizes $\KLgtm$.
% Thus, the KL divergence between these two distributions $\mathrm{D}[p_{\bg^*}( \cdot | \bx_{1-\bm}, \bm) | q( \cdot | \bx_{1-\bm}) ] = 0 $, so does the expected KL divergence $ A_{\bg^*} $. Since zero is the smallest value of $ A_{\bg^*} $, and $A_\bg$ is the only term in the $\KLgtm$ that depends on $\bg$, we have shown that $\bg^* \in \arg\min_{\bg} \KLgtm$. 
\end{proof}

\section{Proof of Lemma \ref{prop.marginal} }
\label{sec.proof.marginal}

\begin{definition}
\label{def:vms}
For fixed values $\by, \bu, \bv, \bm$, 
the event $\Xi$ is defined as \begin{align}\Xi(\bu, \bw, \bm) = \{\bX_{0,\bm} = \bu, \bX_{1,\bm} = \bw, \bM_0=\bm\}.\end{align} 
% Let \(\bX_\bm| \boldm = \{\bX_{\tau,\bm} : \tau \in [0,1]\}\) be a pathwise continuously differentiable stochastic process, where the full state at time \(\tau\) is given by  
% \begin{align}
% \bX_\tau| (\bM_0 = \boldm) := \bigl(\bX_{\tau,\bm},\,\bX_{0, {1-\bm}},\bX_{1, {1-\bm}}\bigr) | (\bM_0 = \boldm) \in \mathbb{R}^{d_m} \times \mathbb{R}^{d_{1-\bm}},
% \end{align}
% with \(\bX_{\tau,\bm}\) representing the missing components and \(\bX_{\tau,1-\bm}=\by_{1-\bm}\) representing the observed components, which equals $\by_{1-\bm}$ unchanged across the process. For any \((\by_\bm,\by_{1-\bm})\) in the support of the law of \(\bX_\tau\), we define the expected velocity field for the missing components as  

% For a fixed $\by, \bu, \bw, \bm \in \mathbb{R}^d$, we can see the optimal solution of the population objective \eqref{eq.rectified}
% is
% \begin{align*}
%     \bv^*(\by, \bu, \bw, \bm) = 
%         \E[\bX_1 - \bX_0 | \bX_{\tau, \bm} = \by, \bX_{0, \bm} = \bu, 
%          \bX_{1, \bm} = \bw, \bM_0 = \bm].
% \end{align*}
and the optimal solution of \eqref{eq.rectified} is 
\begin{align}
    \bv_\bm^*(\by, \bu, \bw, \bm, \tau ) 
    :=\mathbb{E}\Bigl[\dot{\bX}_{\tau,1-\bm} \,\Big|\, \bX_{\tau,1-\bm} = \by,  \Xi(\bu, \bw, \bm) \Bigr].
\end{align}
We denote the density of random variable 
$
\bX_{\tau, 1-\bm}|  \Xi(\bu, \bw, \bm) 
$
as $ p_\tau(\by | \bu, \bw, \boldm). $ 
% By the definition of $\bX_{\tau, \bm} := (1-\tau) \bX_{0, \bm} + \tau \bX_{1, \bm}$, we can see that 
% \begin{align}p_1(\by | \bu, \bw, \boldm) = p_1(\by | \bw).\end{align}

\end{definition}

\begin{lemma}
    \label{lem.ci}
    $p_0(\by| \bu, \bw, \bm) = p_0(\by| \bu, \bm)$ and $p_1(\by| \bu, \bw, \bm) = p_1(\by| \bw)$. 
\end{lemma}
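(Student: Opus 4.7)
The plan is to leverage the independence assumption $\bX_1 \ind (\bX_0, \bM_0)$ stated in Section~\ref{sec.rectified}. For a fixed deterministic mask $\bm$, the slicing operators $(\cdot)_{\bm}$ and $(\cdot)_{1-\bm}$ are just coordinate projections, so $\bX_{0,\bm}$ and $\bX_{0,1-\bm}$ are $\sigma(\bX_0)$-measurable while $\bX_{1,\bm}$ and $\bX_{1,1-\bm}$ are $\sigma(\bX_1)$-measurable. Together with $\bM_0$, this cleanly splits each conditioning event in $\Xi(\bu,\bw,\bm)$ into either an ``$\bX_0$-side'' factor or an ``$\bX_1$-side'' factor, which is exactly the structure that the independence hypothesis will exploit.

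For the first identity (the $\tau=0$ case), I would observe that the target random vector $\bX_{0,1-\bm}$ together with the conditioning events $\{\bX_{0,\bm}=\bu,\,\bM_0=\bm\}$ are all $\sigma(\bX_0,\bM_0)$-measurable, whereas the extra condition $\bX_{1,\bm}=\bw$ is $\sigma(\bX_1)$-measurable. The independence $\bX_1 \ind (\bX_0,\bM_0)$ then implies that this extra conditioning is redundant and can be dropped without changing the conditional law, yielding $p_0(\by\mid\bu,\bw,\bm)=p_0(\by\mid\bu,\bm)$.

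For the second identity (the $\tau=1$ case), I would write the conditional density as a ratio,
\begin{align*}
p_1(\by\mid\bu,\bw,\bm)=\frac{p(\bX_{1,1-\bm}=\by,\,\bX_{1,\bm}=\bw,\,\bX_{0,\bm}=\bu,\,\bM_0=\bm)}{p(\bX_{1,\bm}=\bw,\,\bX_{0,\bm}=\bu,\,\bM_0=\bm)},
\end{align*}
and then factorise the joint density as $p_{\bX_1}(\cdot)\,p_{\bX_0,\bM_0}(\cdot)$ using the independence hypothesis. The $(\bX_0,\bM_0)$ factors in the numerator and denominator cancel, collapsing the expression to $p(\bX_{1,1-\bm}=\by\mid \bX_{1,\bm}=\bw)$, which is precisely $p_1(\by\mid\bw)$.

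No deep obstacle is anticipated. The main technical care is book-keeping of which events are $\sigma(\bX_0,\bM_0)$- versus $\sigma(\bX_1)$-measurable, plus the usual regularity assumption that the conditioning events carry positive density so that the density ratios are well defined. Once these are in hand, both identities collapse to a single application of $\bX_1 \ind (\bX_0,\bM_0)$.
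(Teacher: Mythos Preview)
Your proposal is correct and follows essentially the same approach as the paper: both identities are obtained directly from the independence $\bX_1 \ind (\bX_0,\bM_0)$ by dropping the conditioning variables that lie on the ``other side'' of the independence. Your write-up is slightly more explicit (spelling out $\sigma$-algebra measurability and the density-ratio cancellation for the $\tau=1$ case), but the underlying argument is identical to the paper's one-line invocation of independence for each identity.
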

\begin{proof}
    We can see that due to the independence $\bX_{1} \ind (\bX_0, \bM_0)$, 
    \begin{align}
    \bX_{0, 1-\bm}|  \bX_{0,\bm}, \bX_{1,\bm}, \bM_0 = \bX_{0, 1-\bm}|  \bX_{0,\bm}, \bM_0, 
    \end{align} 
    hence the first equality holds. 
    What's more, 
    \begin{align}
    \bX_{1, 1-\bm}|  \bX_{0,\bm}, \bX_{1,\bm}, \bM_0 = \bX_{1, 1-\bm}|  \bX_{1,\bm}, 
    \end{align}     
    hence the second equality holds. 
\end{proof}
\begin{lemma}
\label{lm:jointcont}
   The density function $p_\tau(\by | \bu, \bw, \boldm)$ satisfies the continuity equation
\begin{equation}
    \partial_\tau p_\tau(\by | \bu, \bw, \boldm) + \nabla_{\by}\cdot\Big(p_\tau(\by | \bu, \bw, \boldm)\,\bv^*(\by, \bu, \bw, \boldm,\tau)\Big) = 0
\end{equation}
\end{lemma}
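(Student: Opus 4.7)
The plan is to mirror the classical test-function / integration-by-parts derivation of the continuity equation for rectified flow, but performed throughout under the conditioning event $\Xi(\bu, \bw, \bm)$. Because this event fixes $\bX_{0,\bm}$, $\bX_{1,\bm}$, and $\bM_0$, and the free coordinates are $\bX_{\tau, 1-\bm} = (1-\tau)\bX_{0, 1-\bm} + \tau \bX_{1, 1-\bm}$, the conditional law of the process on these free coordinates behaves exactly like an (unconditional) interpolation between the two conditional marginals, which is what lets the standard rectified-flow argument go through.

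First, I would fix a smooth compactly supported test function $\phi: \mathbb{R}^{|1-\bm|} \to \mathbb{R}$ and compute $\tfrac{d}{d\tau} \mathbb{E}[\phi(\bX_{\tau, 1-\bm}) \mid \Xi(\bu, \bw, \bm)]$ in two ways. On one hand, differentiating under the expectation (justified by compact support of $\phi$ and integrability of $\dot{\bX}_{\tau, 1-\bm} = \bX_{1, 1-\bm} - \bX_{0, 1-\bm}$), we get
\begin{align}
\frac{d}{d\tau}\mathbb{E}[\phi(\bX_{\tau, 1-\bm}) \mid \Xi] = \mathbb{E}\bigl[\nabla\phi(\bX_{\tau, 1-\bm})^\top \dot{\bX}_{\tau, 1-\bm} \,\big|\, \Xi\bigr].
\end{align}
Applying the tower property by conditioning further on $\bX_{\tau, 1-\bm} = \by$ and invoking Definition~\ref{def:vms}, which identifies $\bv^*(\by, \bu, \bw, \bm, \tau) = \mathbb{E}[\dot{\bX}_{\tau, 1-\bm} \mid \bX_{\tau, 1-\bm}=\by, \Xi]$, turns the right-hand side into
\begin{align}
\int \nabla\phi(\by)^\top\, \bv^*(\by, \bu, \bw, \bm, \tau)\, p_\tau(\by \mid \bu, \bw, \bm)\, d\by.
\end{align}

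On the other hand, writing the same quantity directly in terms of the density and exchanging derivative and integral gives
\begin{align}
\frac{d}{d\tau}\mathbb{E}[\phi(\bX_{\tau, 1-\bm}) \mid \Xi] = \int \phi(\by)\, \partial_\tau p_\tau(\by \mid \bu, \bw, \bm)\, d\by.
\end{align}
Integrating the first expression by parts in $\by$ (boundary terms vanish thanks to the compact support of $\phi$) and equating to the second expression yields
\begin{align}
\int \phi(\by)\Bigl[\partial_\tau p_\tau(\by \mid \bu, \bw, \bm) + \nabla_\by \cdot \bigl(p_\tau(\by \mid \bu, \bw, \bm)\, \bv^*(\by, \bu, \bw, \bm, \tau)\bigr)\Bigr]\,d\by = 0
\end{align}
for every smooth compactly supported $\phi$, and the fundamental lemma of calculus of variations delivers the continuity equation pointwise (or in the distributional sense under weaker regularity).

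The main obstacle I expect is the technical justification of the two interchanges — differentiating under the conditional expectation, and integrating by parts against the (possibly singular) conditional density $p_\tau(\by \mid \bu, \bw, \bm)$. At the endpoints $\tau = 0$ and $\tau = 1$ this density may degenerate (Lemma~\ref{lem.ci} shows it collapses to the marginals of $\bX_{0,1-\bm} \mid \bX_{0,\bm}, \bM_0$ and $\bX_{1,1-\bm} \mid \bX_{1,\bm}$ respectively), so the cleanest route is to argue for $\tau \in (0,1)$, where the linear interpolation smooths the density, and rely on standard regularity assumptions (finite second moments of $\bX_0, \bX_1$ and a smooth reference density) that are implicit throughout the rectified-flow literature and on which the preceding Theorem~\ref{thm:main} already relies.
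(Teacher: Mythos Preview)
Your proposal is correct and follows essentially the same approach as the paper: both compute $\tfrac{d}{d\tau}\mathbb{E}[\phi(\bX_{\tau,1-\bm})\mid \Xi]$ in two ways (once via the density, once via the chain rule plus the tower property and Definition~\ref{def:vms}), integrate by parts, and invoke the fundamental lemma of the calculus of variations. Your version is slightly more explicit about the regularity caveats (compact support of $\phi$, endpoint degeneracy), but the underlying argument is identical.
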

\begin{proof}
%     We start with the full-state expectation using a compactly supported continuously differentiable test function \(h:\mathbb{R}^{d_{m}+d_{1-\boldm}}\to\mathbb{R}\). Recall that the full state is
% \begin{align}
% \bX_{\tau, \bm}|( \bX_{0,1-\boldm}=\bu, \bX_{1,1-\boldm}=\bw, \bM_0 = \bm) 
% \end{align}
% for $\by_{\bm}\in\mathbb{R}^{d_{\bm}}$ with density
% \begin{align}
% p_\tau(\by | \bu, \bw, \boldm),
% \end{align}
% where $\bX_{\tau,1-\bm} =\by_{1-\bm}$ for any $\tau$ keeps unchanged.
%and the dynamics are given by
%\begin{align}
%\partial_\tau \by_{\boldm} = \bv^*_{\boldm}([\by_{\boldm},\by_{1-\boldm}],\boldm,\tau),\qquad \partial_\tau \by_{1-\boldm}=0.
%\end{align}
%That is, although the velocity field \(\bv^*_{\boldm}\) is trained on the full data \((\by_{\boldm},\by_{1-\boldm})\), only the missing part \(\by_{\boldm}\) evolves.

% The expectation is taken over $\bX_{\tau, \bm}$ conditioned on $\bX_{0, 1 - \bm}, \bX_{1, 1 - \bm}, \bM_0$ being fixed, we have
Consider the following expectation: 
\begin{equation}
    \E\big[h(\bX_{\tau, 1-\bm}) |  \Xi(\bu, \bw, \bm)\big] = \int_{\mathbb{R}^{d_{1-\bm}}} h(\by)\,p_\tau(\by | \bu, \bw, \boldm)\,\mathrm{d}\by
\end{equation}
whose time derivative is 
\begin{equation}
\label{eq.lemma3.1}
    \partial_\tau\E\big[h(\bX_{\tau, 1-\bm}) | \Xi(\bu, \bw, \bm) \big] =
    \int h(\by)\,\partial_\tau p_\tau(\by | \bu, \bw, \boldm)\,\mathrm{d} \by. 
\end{equation}
% On the other hand, differentiating with respect to \(\tau\) yields, where we use the chain rule in the first equality and  definition \ref{def:vms}: $$\bv^*(\by, \bu, \bw, \bm, \tau) \;:=\; \mathbb{E}\Bigl[\dot{\bX}_{\tau,\bm} \,\Big|\, \bX_{\tau, \bm}=\by, \bX_{0,1-\bm}=\bu, \bX_{1,1-\bm}=\bw,\; \bM_0=\bm\Bigr]$$ in the second equality
On the other hand, we can see that $\bX_{\tau}$ is a deterministic function of $\bX_0$ and $\bX_1$, thus using the law of unconscious statistician: 
\begin{align}
    & \partial_\tau \E\big[h(\bX_{\tau, 1-\bm}) | \Xi(\bu, \bw, \bm) \big] \\ 
    =& \partial_\tau \E_{\bX_0, \bX_1}\big[h(\bX_{\tau, 1-\bm}) | \Xi(\bu, \bw, \bm) \big] \\
    =& \E_{\bX_0, \bX_1}\big[\partial_\tau h(\bX_{\tau, 1-\bm}) | \Xi(\bu, \bw, \bm) \big] \\
    =& \E\big[\partial_\tau h(\bX_{\tau, 1-\bm}) | \Xi(\bu, \bw, \bm) \big]. 
\end{align}
Expanding the previous expression using the chain rule, 
\begin{align}
    & \partial_\tau\E\big[h(\bX_{\tau, 1-\bm}) | \Xi(\bu, \bw, \bm) \big] \\
     = & \E\big[\partial_\tau h(\bX_{\tau,1-\bm}) | \Xi(\bu, \bw, \bm) \big] 
    \\
    =&\E\Big[\nabla h(\bX_{\tau,1-\boldm})^\top\,\dot{\bX}_{\tau,1-\bm}
      \Big|\; \Xi(\bu, \bw, \bm)\Big]
    \\
    =&\E\Big[\,
      \E\big[\nabla  h(\bX_{\tau,1-\bm})^\top\,\dot{\bX}_{\tau,1-\bm}
      \;\big| \bX_{\tau, 1-\bm}; \Xi(\bu, \bw, \bm) \big]
      \;\Big|\; \Xi(\bu, \bw, \bm) \Big]
    \\
    =&\E\Big[\,
      \nabla h(\bX_{\tau, 1-\bm})^\top\,
      \E\big[\dot{\bX}_{\tau,1-\bm}\;\big|\;\bX_{\tau, 1-\bm}; \Xi(\bu, \bw, \bm)  \big]
      \;\Big|\;\Xi(\bu, \bw, \bm)\Big]
    \\
    % =&\E\Big[\nabla h(\bX_{\tau, \bm})^\top\,\bv^*(\by,\bw, \bu, \tau)
    %   \;\Big|\;\Xi(\bu, \bw, \bm) \Big]
    % \\
    =&\int \nabla h(\by)^\top\,\bv^*(\by,\bu,\bw,\bm,\tau)\,
      p_\tau(\by|\bu,\bw,\bm)\,\mathrm{d}\by
    \\
    =&-\int h(\by)\,
      \nabla\!\cdot\!\Big(\bv^*(\by,\bu,\bw, \bm, \tau)\,
      p_\tau(\by|\bu,\bw,\bm)\Big)\,
      \mathrm{d}\by \label{eq.lemma3.2}
\end{align}
where the last equality by performing integration by parts on the right-hand side (with respect to \(\by\)).
%Even though both integrals are over \(\mathrm{d}\by_{\boldm}\,\mathrm{d}\by_{1-\boldm}\), note that only the missing component \(\by_{\boldm}\) has a nonzero evolution. %Thus, when we integrate by parts, we differentiate only with respect to \(\by_{\boldm}\) (the integration in \(\by_{1-\boldm}\) is carried along unchanged). 

%since \(\partial_\tau \by_{1-\boldm}=0\) and \(\partial_\tau \by_{\boldm}=\bv^*_{\boldm}([\by_{\boldm},\by_{1-\boldm}],\boldm,\tau)\)).
Now we equate the two representations (\eqref{eq.lemma3.1} and \eqref{eq.lemma3.2}) for \(\partial_\tau\E[h(\bX_{\tau,1- \bm}) | \Xi(\bu, \bw, \bm)]\):
\begin{align}
    \int h(\by)\,\partial_\tau p_\tau(\by|\bu,\bw,\bm)\,\mathrm{d}\by 
    = -\int h(\by)\, \nabla\!\cdot\!\Big(\bv^*(\by,\bu,\bw, \bm, \tau)\, p_\tau(\by|\bu,\bw,\bm)\Big)\,\mathrm{d}\by.
\end{align}
Since this equality holds for every smooth, compactly supported test function \(h(\by)\), the fundamental lemma of the calculus of variations implies that,
\begin{equation}
    \partial_\tau p_\tau(\by|\bu,\bw,\bm) + \nabla\!\cdot\!\Big(\bv^*(\by,\bu,\bw, \bm, \tau)\, p_\tau(\by|\bu,\bw,\bm)\Big) = 0.
\end{equation}
We can see that $\bv^*$ satisfies the continuity equation for a time-varying density function $p_\tau$. 
\end{proof}

We can see that with the initial condition 
\[\bZ_0 = \bX_{0,1-\bm} | \Xi(\bu, \bw, \bm),\] Lemma \ref{lm:jointcont} says \[\bZ_1  = \bX_{1,1-\bm} | \Xi(\bu, \bw, \bm).\] 
Using Lemma \ref{lem.ci}, we simplify $\bZ_0$ and $\bZ_1$ as 
\begin{align*}
    \bZ_0 = \bX_{0,1-\bm} | \left\{\bX_{0, \bm} = \bu, \bM_{0} = \bm\right\}, ~~~ \bZ_1 = \bX_{1,1-\bm} | \left\{\bX_{1, \bm} = \bw\right\}. 
\end{align*}
Letting $\bu = \bw = \bx_{\bm}$, we obtain the desired results in Lemma \ref{prop.marginal}.

\section{Proof of Proposition \ref{eq.kl.decompose}}
\label{sec.proof.kl.decompose}
\begin{proof}
Since the sequential imputer \eqref{eq.impute.seq} never changes the observed part $\bX_{\bm}^{(t)}$ nor the missing mask $\bM$,
we have 
\begin{align}
\label{eq.fac.pxm}
p_\bg(\bx, \bm) = p_\bg(\bx_{1-\bm} | \bx_{\bm}, \bm) p(\bx_{\bm}, \bm), 
\end{align}
where the marginal density $p(\bx_{\bm}, \bm)$ does not depend on $\bg$. 
Now we show that the KL divergence only depends on $\bg$ through the KL divergence $\mathrm{D}[p_\bg( \cdot | \bx_{1-\bm}, \bm) | q( \cdot | \bx_{1-\bm}) ]$. 
    \begin{align}
        & \KLgtm \\
        =& \E_{ (\bx, \bm) \sim (\bX_\bg^{(t)}, \bM)} 
        \left[ \log \frac{p_\bg(\bx, \bm)}{q(\bx)p(\bm)} \right] \\
        % &= \E_{ (\bx, \bm) \sim (\bX^{(t-1)}, \bM)} 
        % \left[ \log \frac{p_\bg(\bg(\bx, \bm), \bm)}{q(\bg(\bx, \bm))p(\bm)} \right] \text{law of uncon. stat.}\\
        % &= \E_{ (\bx, \bm) \sim (\bX^{(t-1)}, \bM)} 
        % \left[ \log \frac{p_\bg([\bg(\bx, \bm)]_\bm, [\bg(\bx, \bm)]_{1-\bm}, \bm)}{q([\bg(\bx, \bm)]_\bm, [\bg(\bx, \bm)]_{1-\bm}, \bm)p(\bm)} \right] \\
        =& \E_{ (\bx, \bm) \sim (\bX_\bg^{(t)}, \bM)} 
        \left[ \log \frac{p_\bg(\bx_{1-\bm}, \bx_{\bm}, \bm)}{q(\bx_{1-\bm}, \bx_{\bm})p(\bm)} \right] \\
        =& \E_{ (\bx, \bm) \sim (\bX_\bg^{(t)}, \bM)} 
        \left[ \log \frac{p_\bg(\bx_{1-\bm}| \bx_{\bm}, \bm)}{q(\bx_{1-\bm} | \bx_{\bm})} \right] + \E_{ (\bx, \bm) \sim (\bX_\bg^{(t)}, \bM)} 
        \left[ \log \frac{p( \bx_{\bm}, \bm)}{q( \bx_{\bm})p(\bm)} \right]\\
        =& \sum_{\bm \in \{0,1\}^d}p(\bm)\E_{ \bx \sim \bX_\bg^{(t)}|\bM = \bm} \left[\log \frac{p_\bg(\bx_{1-\bm} | \bx_{\bm}, \bm)}{q(\bx_{1-\bm} | \bx_{\bm})}\right] + \mathrm{const.} \label{eq.const}\\
        % &= \sum_{\bm \in \{0,1\}^d}p(\bm) \int p(\bx_{1-\bm} | \bm) 
        % \E_{ \bx \sim (\bX_\bm^{(t-1)}|\bX_{1-\bm}^{(t-1)}, \bM = \bm)} \left[\log \frac{p_\bg(\bg(\bx, \bm) | \bx_{1-\bm}, \bm)}{q(\bg(\bx, \bm) | \bx_{1-\bm})}\right] \mathrm{d} \bx_{1-\bm},\\
        =& \sum_{\bm \in \{0,1\}^d}p(\bm) \underbrace{\int p(\bx_{\bm} | \bm) 
        \mathrm{D}[p_\bg( \bx_{1-\bm} | \bx_{\bm}, \bm) \Vert q( \bx_{1-\bm} | \bx_{\bm}) ] \mathrm{d} \bx_{\bm}}_{A_\bg}
         + \mathrm{const.},
    \end{align}
    where $\mathrm{const.}$ does not depend on $\bg$. The constant in \eqref{eq.const} is due to the factorization \eqref{eq.fac.pxm}:
    \begin{align}
        \E_{ (\bx, \bm) \sim (\bX_\bg^{(t)}, \bM)} 
        \left[ \log \frac{p( \bx_{\bm}, \bm)}{q( \bx_{\bm})p(\bm)} \right] = \E_{ p(\bx_\bm, \bm)} \E_{ p_\bg(\bx_{1-\bm}|\bx_\bm, \bm) }
        \left[ \log \frac{p( \bx_{\bm}, \bm)}{q( \bx_{\bm})p(\bm)} \right] = \mathrm{const.} 
    \end{align}
    
    % Since the second term does not depend on $\bg$, we have that minimizing $\KLgtm$ is equivalent to minimizing $A_\bg$.     
    Assuming $p(\bx_{\bm} | \bm)$ is positive everywhere, $A_\bg$ is minimized if and only if \begin{align}\mathrm{D}[p_\bg( \bx_{1-\bm} | \bx_{\bm}, \bm) | q( \bx_{1-\bm} | \bx_{\bm}) ] = 0.\end{align} 
    The KL divergence $\mathrm{D}[p_\bg( \bx_{1-\bm} | \bx_{\bm}, \bm) \Vert q( \bx_{1-\bm} | \bx_{\bm}) ]$ is zero if and only if \begin{align}p_\bg( \bx_{1-\bm} | \bx_{\bm}, \bm) = q( \bx_{1-\bm} | \bx_{\bm}).\end{align} 
    Thus, if $p(\bm)$ is strictly positive, $\KLgtm$ is minimized if and only if $p_\bg( \bx_{1-\bm} | \bx_{\bm}, \bm) = q( \bx_{1-\bm} | \bx_{\bm})$.
    
    % As it is the only term that depends on $\bg$, we have shown $\KLgtm$ is minimized at $\bg^*$. 
\end{proof}

\section{Validity of MIRI under MAR Setting}
\label{app:mar_proof}
Recall that we denote the observed components of a vector $\bX$ as $\bX_{\bM}$ and the missing components as $\bX_{1-\bM}$. Under the Missing at Random (MAR) assumption, the goal is to minimize the conditional mutual information:
\begin{align}
    \mathrm{I}[\bX_{1-\bM}(\bg); \bM | \bX_\bM] = \E_{(\bx,\bm)\sim (\bX,\bM)}\left[\log \frac{p_{\bg}(\bx_{1-\bm}, \bm |\bx_\bm )}{p_{\bg}(\bx_{1-\bm}| \bx_\bm)p(\bm|\bx_\bm)}\right].
\end{align}
Following a similar argument to the proof of Proposition \ref{prop.miri}, we can define an iterative algorithm to reduce this conditional mutual information. At each iteration $t$, we find the optimal imputer $\bg_t$ by solving:
\begin{align}
    \bg_t \in \arg\min_{\bg} \E_{(\bx,\bm) \sim (\bX_\bg^{(t)}, \bM)}\left[\log \frac{p_{\bg}(\bx_{1-\bm}, \bm | \bx_\bm )}{q(\bx_{1-\bm} | \bx_\bm) q(\bm | \bx_\bm)}\right],
\end{align}
where $p_{\bg}$ is the density corresponding to the distribution of $(\bX_\bg^{(t)}, \bM)$, and $q$ is the density of $\bX^{(t-1)}$ from the previous iteration.

We now show that $p_{\bg}(\bx_{1-\bm} | \bm, \bx_\bm) = q(\bx_{1-\bm} | \bx_\bm)$ is a sufficient condition for $\bg$ to be optimal. Analogous to the proof of Proposition \ref{eq.kl.decompose}, we can rewrite the objective function. Since the missingness mechanism is fixed and does not depend on the imputer $\bg$, we have $p(\bm | \bx_\bm) = q(\bm | \bx_\bm)$. The objective then simplifies:
\begin{align}
    & \E_{(\bx,\bm) \sim (\bX_\bg^{(t)}, \bM)}\left[\log \frac{p_{\bg}(\bx_{1-\bm}, \bm | \bx_\bm )}{q(\bx_{1-\bm} | \bx_\bm) q(\bm | \bx_\bm)}\right] \\
    =& \E_{(\bx,\bm) \sim (\bX_\bg^{(t)}, \bM)}\left[\log \frac{p_{\bg}(\bx_{1-\bm} | \bm, \bx_\bm ) p(\bm | \bx_\bm)}{q(\bx_{1-\bm} | \bx_\bm) q(\bm | \bx_\bm)}\right] \\
    =& \E_{(\bx,\bm) \sim (\bX_\bg^{(t)}, \bM)}\left[\log \frac{p_{\bg}(\bx_{1-\bm} | \bm, \bx_\bm )}{q(\bx_{1-\bm} | \bx_\bm)}\right] \\
    =& \E_{(\bm, \bx_\bm) \sim (\bM, \bX_\bM)} \E_{\bx_{1-\bm} \sim p_{\bg}(\cdot | \bm, \bx_\bm)} \left[\log \frac{p_{\bg}(\bx_{1-\bm} | \bm, \bx_\bm )}{q(\bx_{1-\bm} | \bx_\bm)}\right] \\
    =& \E_{(\bm, \bx_\bm) \sim (\bM, \bX_\bM)} \mathrm{D}[p_{\bg}(\cdot | \bm, \bx_\bm) \Vert q(\cdot | \bx_\bm)].
\end{align}
The inner expectation is the KL divergence between the conditional distributions $p_{\bg}(\bx_{1-\bm} | \bm, \bx_\bm)$ and $q(\bx_{1-\bm} | \bx_\bm)$. The KL divergence is non-negative and is minimized (to zero) if and only if the two distributions are equal:
\begin{align}
    p_{\bg}(\bx_{1-\bm} | \bm, \bx_\bm) = q(\bx_{1-\bm} | \bx_\bm).
\end{align}
This demonstrates that the same optimality condition derived for the MCAR setting in Proposition \ref{eq.kl.decompose} also holds under the MAR assumption.

\section{Experimental Setup}
\label{sec:exp.setup}

This appendix provides all information necessary for reproducibility: datasets, evaluation protocol, and computational resources. The hyperparameter settings and its sensitivity studies can be found in our supplemental material. 
Hyperparameters were selected through a sensitivity analysis. The synthetic datasets used in this sensitivity study consist of 1000 samples and 20 features. A missing rate of 20\% was applied to each dataset. The data types include Gaussian, Uniform Correlated, and Mixed (Gaussian and Uniform). The results indicate that MIRI is \emph{not sensitive} to the hyperparameters selections. See our supplementary material for more details. 

\subsection{Datasets}
\label{sec:datasets}

\paragraph{Synthetic Data.}  
We generate $N=6\,000$ samples in $\mathbb{R}^2$ by drawing two equally‐sized clusters of $n=3\,000$ points each from isotropic Gaussians. One cluster is centered at $(-2,-2)$ and the other at $(2,2)$, both with standard deviation $\sigma=0.5$.

\paragraph{UCI Regression Benchmarks.}
Table~\ref{tab:uci} lists the ten UCI datasets used.  For each, we report sample size and feature dimensionality.  

\begin{table}[h]
  \centering
  \caption{UCI datasets used in our study.}
  \label{tab:uci}
  \begin{tabular}{lrr}
    \toprule
    \textbf{Dataset} & \textbf{\# Samples} & \textbf{\# Features} \\
    \midrule
    \texttt{wine}          & 1\,599  & 11  \\
    \texttt{energy}        & 768     & 8   \\
    \texttt{parkinsons}    & 5\,875  & 20  \\
    \texttt{stock}         & 536     & 11  \\
    \texttt{pumadyn32nm}   & 8\,192  & 32  \\
    \texttt{housing}       & 506     & 13  \\
    \texttt{forest}        & 517     & 12  \\
    \texttt{bike}          & 17\,379 & 17  \\
    \texttt{solar}         & 1\,066  & 10  \\
    \texttt{gas}           & 2\,565  & 128 \\
    \bottomrule
  \end{tabular}
\end{table}

\paragraph{CIFAR-10.}  
We randomly sample 5\,000 32$\times$32 RGB images and apply pixel-level MCAR masks at varying rates.  All three colour channels of each missing pixel are masked.  

\paragraph{CelebA.}  
We randomly sample 5\,000 64$\times$64 RGB images and apply channel-level MCAR masks at varying rates.  All three colour channels of each missing pixel are masked independently.  

\subsection{Evaluation Protocol}
\label{sec:protocol}
For each dataset, we generate ten independent MCAR masks and rerun the full training and imputation pipeline.  Reported results are mean $\pm$ standard deviation over these runs.

\subsection{Hyperparameter Selection}
\label{sec:hyperparams}

Values were chosen via sensitivity analysis.  Complete settings and search ranges are provided in our supplemental material.  MIRI exhibits \emph{low sensitivity} to these choices.

\subsection{Computational Resources}
\label{sec:hardware}

\paragraph{Tabular Experiments.}  
NVIDIA P100 GPU (16 GB), Intel Xeon E5-2680 v4 CPU (8 cores, 2.4 GHz), 24 GB RAM.

\paragraph{Image Experiments.}  
NVIDIA RTX 3090 GPU (24 GB), Intel Xeon Gold 6330 CPU (14 cores, 2.0 GHz), 90 GB RAM.

\subsection{Baseline Implementations}
\label{sec:baseline-impl}

We evaluate \textbf{HyperImpute}, \textbf{MICE}, \textbf{MIWAE}, and \textbf{Sinkhorn} using the official HyperImpute repository with default settings.\footnote{\url{https://github.com/vanderschaarlab/hyperimpute}.}

We adapt the official implementations of \textbf{KnewImp}\footnote{\url{https://github.com/JustusvLiebig/NewImp}.}, \textbf{TabCSDI}\footnote{\url{https://github.com/pfnet-research/TabCSDI}.}, \textbf{GAIN}\footnote{\url{https://github.com/jsyoon0823/GAIN}.}, and
\textbf{TDM}\footnote{\url{https://github.com/hezgit/TDM}.}
from their respective GitHub repositories.  \textbf{MissDiff} is reimplemented based on the algorithm described in the original publication \citep{ouyang_missdiff_2023}.  All baselines use default hyperparameters unless otherwise stated.

\section{Additional Experimental Results}
\label{sec:additional:EXP}

\subsection{Performance on Synthetic Data}
\label{sec.toy.perf.metrics}

We evaluate each imputation method using three criteria, computed only on entries originally masked under the MCAR mechanism:  
(1) Root Mean Square Error (RMSE);  
(2) Mean Absolute Error (MAE);  
(3) Maximum Mean Discrepancy (MMD).
Table \ref{table:toy} reports the mean $\pm$ standard deviation over ten independent MCAR masks (30\% missing).

\begin{table}[t]
    \centering
    \caption{{\bf Performance metrics at 30\% missingness.} Metrics computed over 10 runs; values denote mean~$\pm$~standard deviation. The best values are highlighted in bold.}
    \label{table:toy}
    \begin{tabular}{lccc}
    \toprule
    \textbf{Method}           & RMSE ($\downarrow$)      & MAE ($\downarrow$)      & MMD ($\downarrow$)      \\
    \midrule
    \textbf{GAIN}             & 1.128 $\pm$ 0.004         & 0.600 $\pm$ 0.002         & 0.342 $\pm$ 0.015        \\
    \textbf{TabCSDI}          & 1.128 $\pm$ 0.004         & 0.600 $\pm$ 0.002         & 0.337 $\pm$ 0.010        \\
    \textbf{KnewImp}          & 1.130 $\pm$ 0.004         & 0.600 $\pm$ 0.002         & 0.335 $\pm$ 0.009        \\
    \textbf{MissDiff}         & 0.951 $\pm$ 0.008         & 0.437 $\pm$ 0.004         & 0.189 $\pm$ 0.009        \\
    \textbf{HyperImpute}      & \textbf{0.862 $\pm$ 0.020} & \textbf{0.278 $\pm$ 0.006} & 0.094 $\pm$ 0.018        \\
    \midrule
    \rowcolor[gray]{0.9}
    \textbf{MIRI (Ours*)}     & 0.938 $\pm$ 0.022         & 0.325 $\pm$ 0.009         & \textbf{0.036 $\pm$ 0.007} \\
    \bottomrule
    \end{tabular}
\end{table}

Overall, HyperImpute is optimal when minimizing per‐entry error, achieving roughly 8\% lower RMSE and 14\% lower MAE than MIRI.  In contrast, MIRI reduces MMD by more than 60\% relative to HyperImpute, thereby better preserving the underlying data distribution despite a modest increase in point‐wise error.  MissDiff offers a balanced trade‐off, whereas GAIN, TabCSDI, and KnewImp underperform on both fronts.  

\subsection{Additional UCI Regression Experiments}
\label{sec:uci-add}

We comprehensively evaluate imputation performance on ten UCI regression benchmarks under MCAR, MAR, and MNAR settings (Figures~\ref{fig:uci_mcar_add}, \ref{fig:uci_mar_add}, and \ref{fig:uci_mnar_add}). In some high-dimensional or high-missingness scenarios, certain baselines failed; for example, HyperImpute produced runtime errors and MIWAE encountered out-of-memory (OOM) errors. The results confirm that MIRI consistently delivers strong distributional fidelity, remains robust under high missing rates, and scales effectively to high-dimensional datasets.

% \subsubsection{Results under MCAR Mechanism}

% Figure~\ref{fig:uci_add} shows the MMD results on ten UCI regression benchmarks under three MCAR levels: 20\% (top), 40\% (middle), and 60\% (bottom). Lower values indicate better distributional matching.

% \paragraph{20\% Missingness.}  
% MIRI attains an average MMD of $\approx0.05$, compared to HyperImpute ($\approx0.07$), MissDiff ($\approx0.04$), and MICE ($\approx0.10$). KnewImp, TabCSDI and GAIN suffer much higher MMD—$\approx0.38$, $\approx0.54$, and $\approx0.60$, respectively. The largest relative gains of MIRI over HyperImpute occur on high-dimensional datasets (\texttt{gas}, \texttt{pumadyn32nm}), where MIRI reduces MMD by over $\approx30\%$.

% \paragraph{40\% Missingness.}  
% MIRI’s MMD increases slightly to $\approx0.04$, while HyperImpute remains at $\approx0.06$ and MissDiff at $\approx0.03$. MICE degrades to $\approx0.07$. KnewImp, TabCSDI and GAIN remain elevated at $\approx0.27$, $\approx0.55$, and $\approx0.55$, respectively.

% \paragraph{60\% Missingness.}  
% Under severe missingness, MIRI’s MMD is $\approx0.05$, still below HyperImpute ($\approx0.07$) and MissDiff ($\approx0.04$). MICE further deteriorates to $\approx0.10$, while KnewImp, TabCSDI and GAIN exhibit persistently high MMD ($\approx0.21$, $\approx0.55$, and $\approx0.52$, respectively).

% These results confirm that MIRI consistently delivers strong distributional fidelity, remains robust under high missing rates, and scales effectively to high-dimensional datasets.  

\begin{figure}
    \center
    \includegraphics[width=.95\textwidth]{./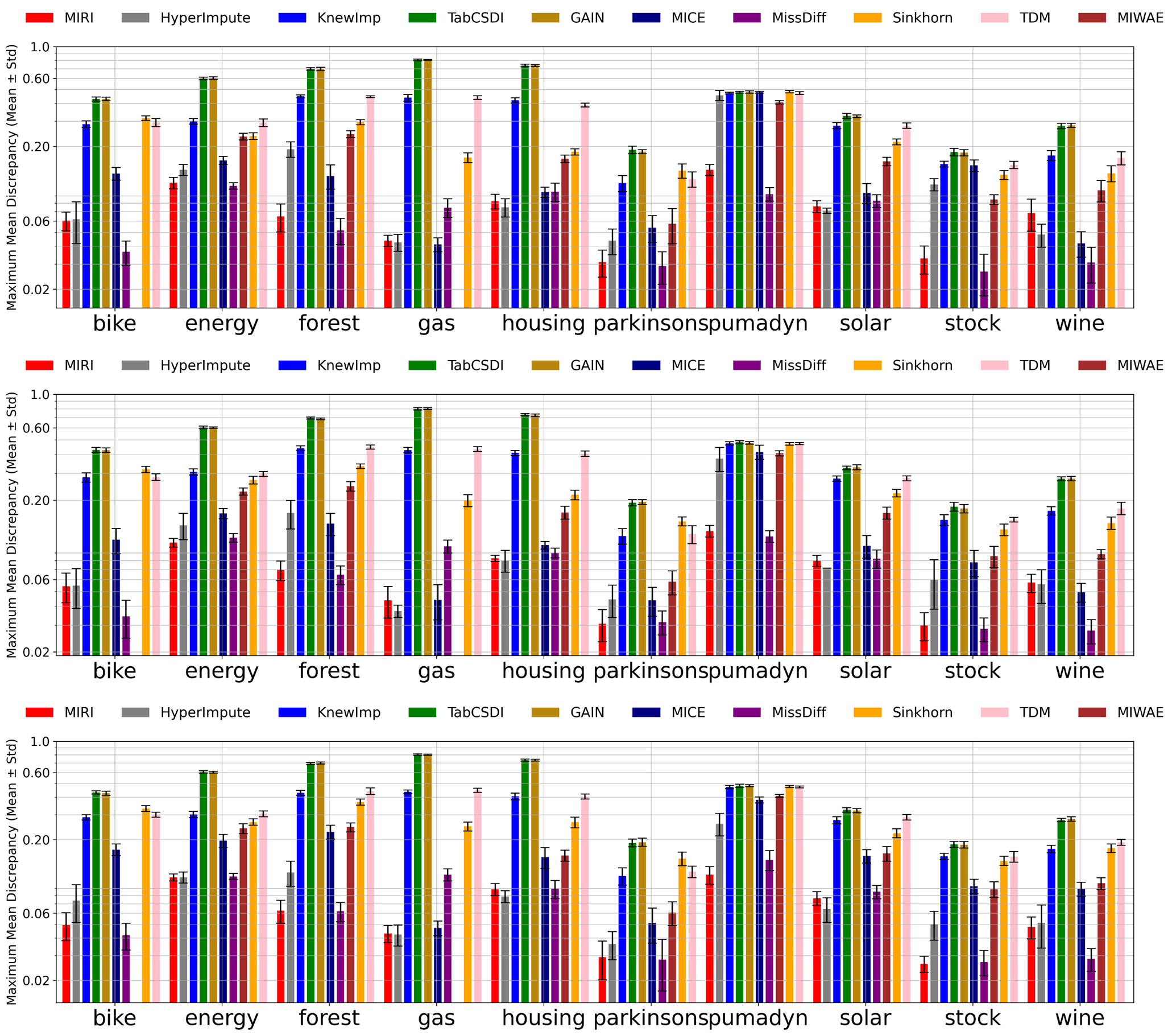}
    \caption{MCAR MMD on 10 UCI datasets (Above: 20\% missingness, Middle: 40\% missingness, Below: 60 \% missingness). The lower the better.}
    \label{fig:uci_mcar_add}
\end{figure}

% \subsection{Results under MAR Mechanism}

\begin{figure}
    \center
    \includegraphics[width=.95\textwidth]{./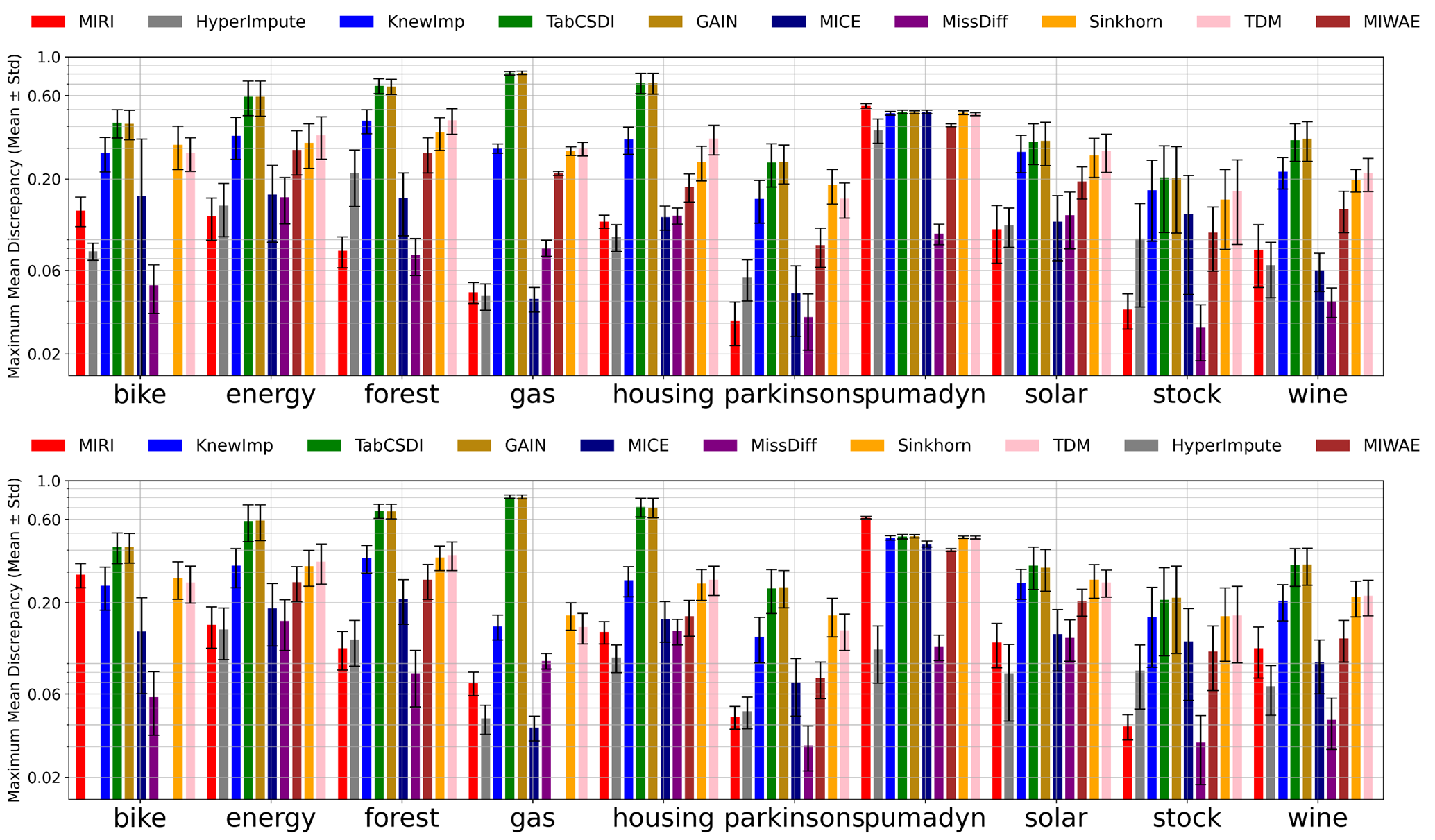}
    \caption{MAR MMD on 10 UCI datasets (Above: 40\% missingness, Below: 80 \% missingness). The lower the better.}
    \label{fig:uci_mar_add}
\end{figure}

% \subsection{Results under MNAR Mechanism}

\begin{figure}
    \center
    \includegraphics[width=.95\textwidth]{./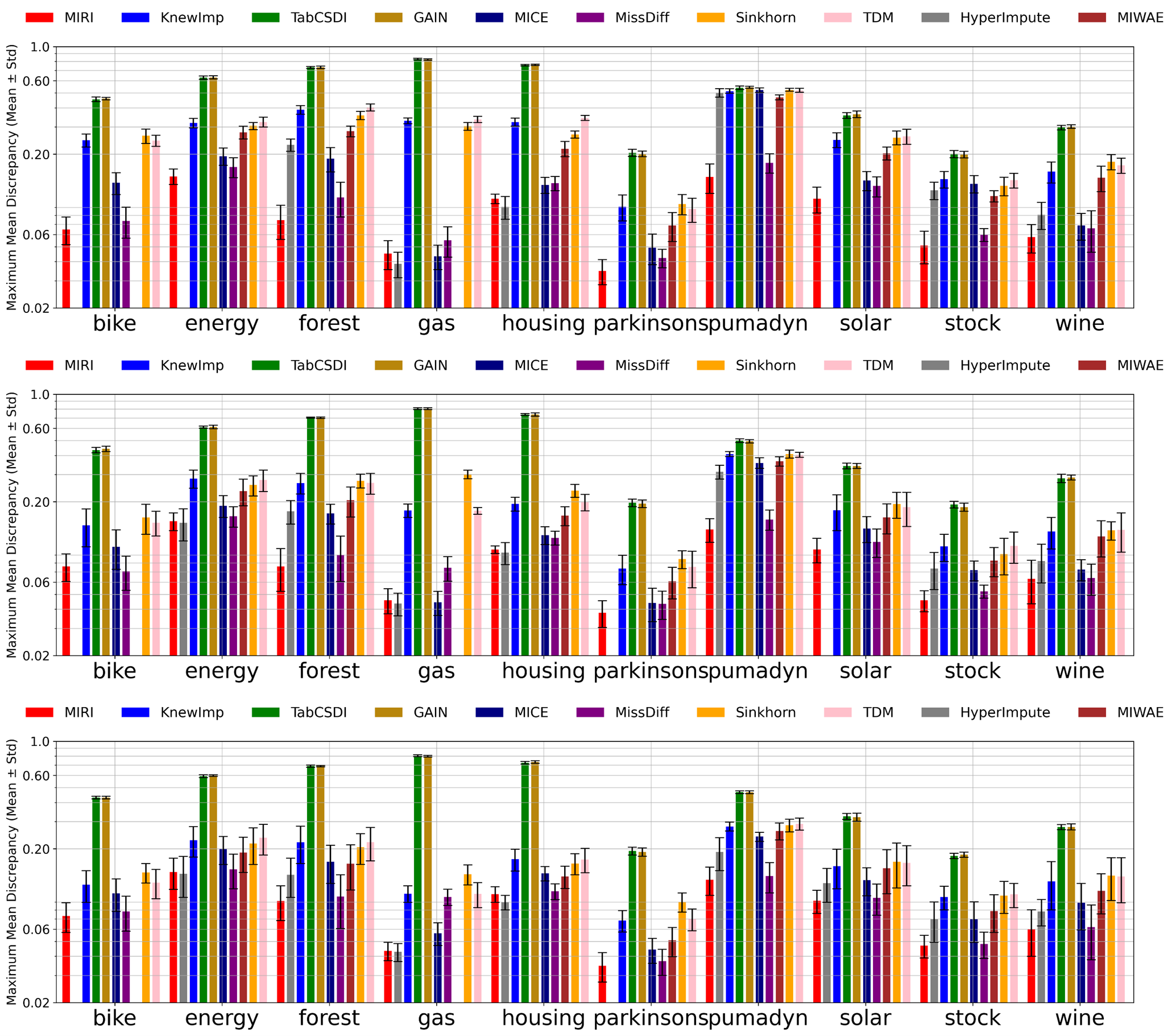}
    \caption{MNAR MMD on 10 UCI datasets (Above: 20\% missingness, Middle: 40\% missingness, Below: 60 \% missingness). The lower the better.}
    \label{fig:uci_mnar_add}
\end{figure}

\subsection{Additional CIFAR-10 Experiments}
\label{sec:cifar-add}

We evaluate the imputation quality of MIRI, GAIN, KnewImp, and HyperImpute on 15 randomly selected CIFAR-10 images corrupted under pixel-level MCAR with missing rates of 20\%, 40\% and 60\% (see Figures \ref{fig.cifar20}, \ref{fig.cifar40} and \ref{fig.cifar60}). Across all missing-data regimes, MIRI consistently produces accurate, visually coherent reconstructions, in agreement with the quantitative metrics in Table \ref{tab:cifar_metrics}. Although HyperImpute achieves competitive performance at 20\% missingness, MIRI outperforms it at both 40\% and 60\% missingness.

% \begin{figure}[t]
%     \centering
%     \includegraphics[width=.999\textwidth]{./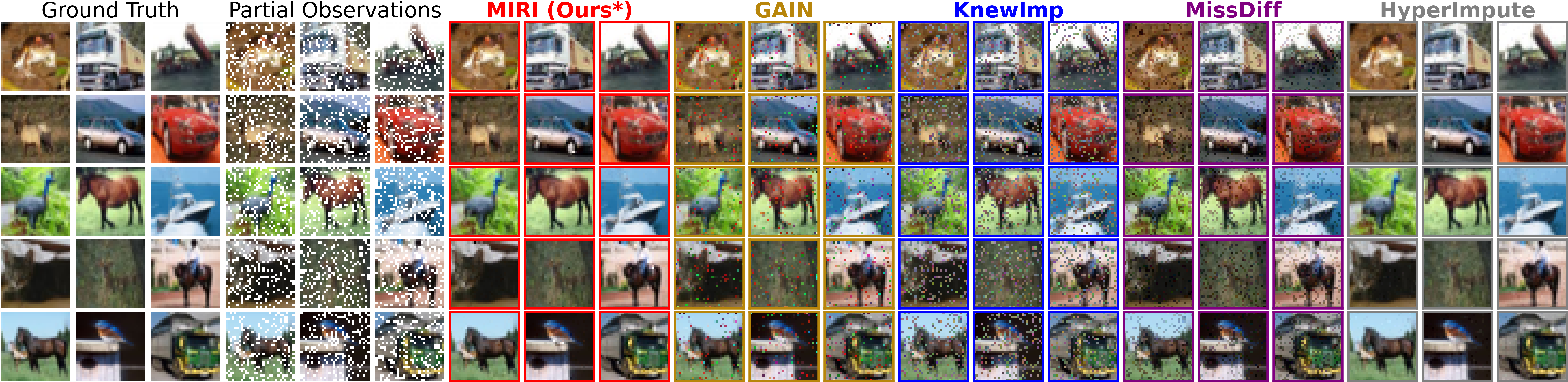} 
%     \caption{15 uncurated 32$\times$32 CIFAR-10 images and their imputations. 20\% of pixels randomly removed from \emph{all RGB channels}.}
%     \label{fig.cifar20}
% \end{figure}

% \begin{figure}[t]
%     \centering
%     \includegraphics[width=.999\textwidth]{./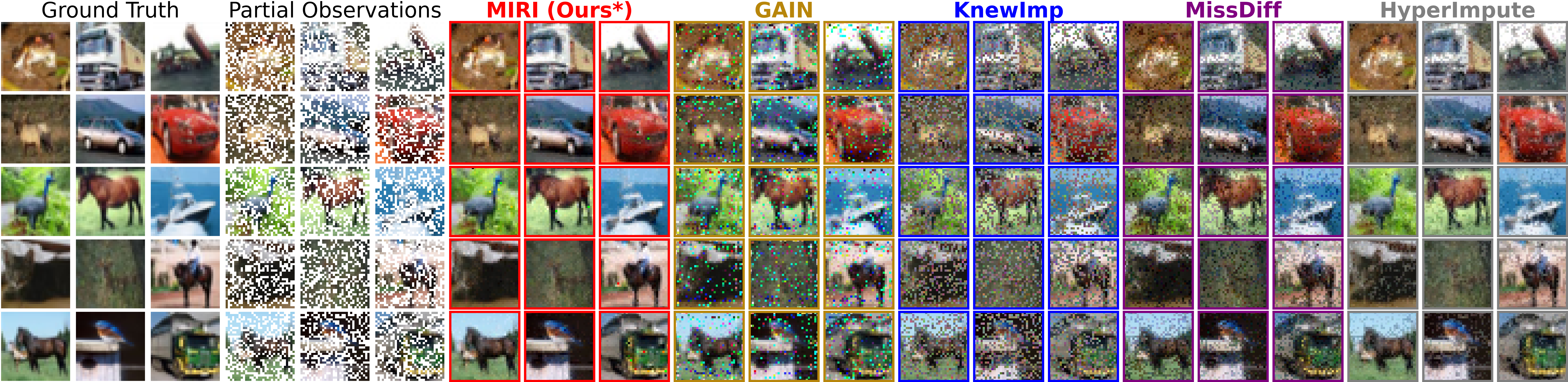} 
%     \caption{15 uncurated 32$\times$32 CIFAR-10 images and their imputations. 40\% of pixels randomly removed from \emph{all RGB channels}.}
%     \label{fig.cifar40}
% \end{figure}

% \begin{figure}[t]
%     \centering
%     \includegraphics[width=.999\textwidth]{./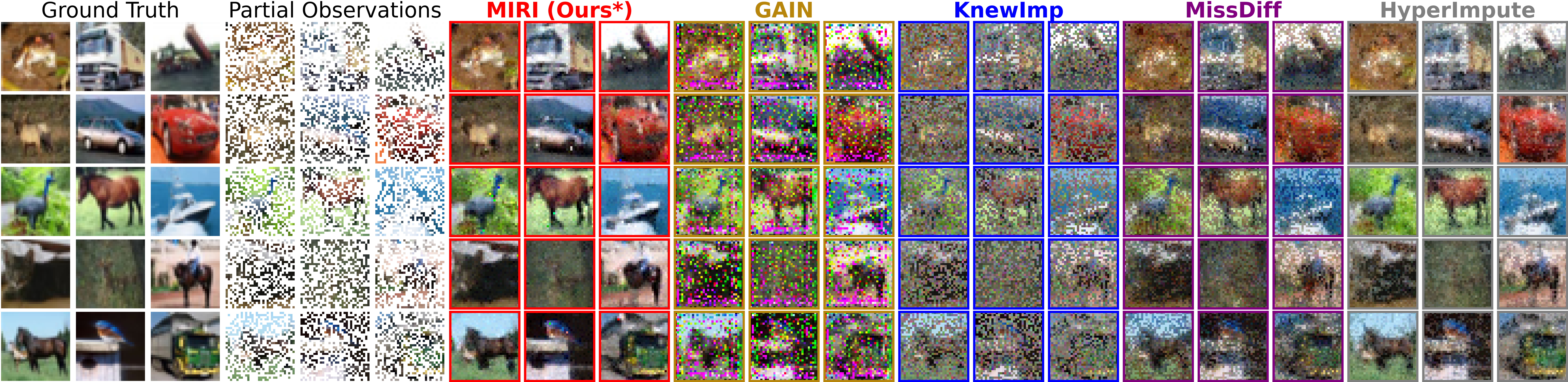} 
%     \caption{15 uncurated 32$\times$32 CIFAR-10 images and their imputations. 60\% of pixels randomly removed from \emph{all RGB channels}.}
%     \label{fig.cifar60}
% \end{figure}

\begin{figure}[t]
    \centering % Center all subfloats
    
    \subfigure[15 uncurated 32$\times$32 CIFAR-10 images and their imputations. 20\% of pixels randomly removed from \emph{all RGB channels}. \label{fig.cifar20}]
    {
        \includegraphics[width=.999\textwidth]{./Imgs/Cifar10/cifar10_20_large.png}
    }
    % A blank line is needed to stack them vertically
    
    \subfigure[15 uncurated 32$\times$32 CIFAR-10 images and their imputations. 40\% of pixels randomly removed from \emph{all RGB channels}. \label{fig.cifar40}]
    {
        \includegraphics[width=.999\textwidth]{./Imgs/Cifar10/cifar10_40_large.png}
    }
    
    \subfigure[15 uncurated 32$\times$32 CIFAR-10 images and their imputations. 60\% of pixels randomly removed from \emph{all RGB channels}. \label{fig.cifar60}]
    {
        \includegraphics[width=.999\textwidth]{./Imgs/Cifar10/cifar10_60_large.png}
    }
\end{figure}

% Moreover, MI is consistent with MMD and the visual quality of the imputation, while MAE doesn't reflect the imputation quality faithfully. In particular, GAIN has an MAE comparable to MIRI, but it performs much worse in terms of MMD, MI, and visual quality. This observation suggests that MMD and MI are better metrics when evaluating how the imputer can recover the true data distribution $\Prob_{\bX^*}$.

\section{Computational Time}

Computational cost matters in high dimensions. MIRI is end-to-end vectorized (``vector in, vector out'') and exploits parallel hardware common in image/tabular pipelines, whereas round-robin methods such as MICE and HyperImpute \citep{Jarrett2022HyperImpute,vanbuuren2011mice} are inherently sequential and scale poorly with dimensionality.

\paragraph{Wall-clock runtimes.}
Table~\ref{tab:time_mean_std_hours} reports mean $\pm$ std hours to impute $1000$ samples across dimensionalities. We include diffusion-style baselines (DiffPuter \citep{zhang2025diffputer}, TabCSDI \citep{zheng_diffusion_2023}), round-robin (HyperImpute), distribution-matching (MOT \citep{muzellec2020missing}, TDM \citep{zhao2023transformed}) and VAE (MIWAE \citep{mattei2019miwae}). MIRI is on par with diffusion-based methods and substantially faster than round-robin in high dimensions. On CIFAR-10 (5k images, 60\% missing), HyperImpute did not finish within a 24h budget on our PyTorch+CUDA setup, while MIRI finished in $\approx$3.5h on the same hardware.

\begin{table}[t]
\centering
\caption{Average computation time (hours) for 1000 samples across dimensions (mean$\pm$std).}
\label{tab:time_mean_std_hours}
\small
\begin{tabular}{lcccccc}
\toprule
\textbf{Method}
  & \textbf{50d}
  & \textbf{200d}
  & \textbf{500d}
  & \textbf{1000d}
  & \textbf{2000d}
  & \textbf{5000d} \\
\midrule
HyperImpute         & $0.08\pm0.03$ & $0.23\pm0.02$ & $1.18\pm0.51$ & $3.84\pm0.99$ & $11.65\pm3.34$ & $57.54\pm0.00$ \\
TabCSDI             & $0.18\pm0.00$ & $0.20\pm0.01$ & $0.26\pm0.00$ & $0.36\pm0.00$ & $0.53\pm0.01$  & $1.10\pm0.01$ \\
DiffPuter           & $2.30\pm0.01$ & $3.22\pm0.18$ & $5.06\pm0.01$ & $7.70\pm0.05$ & $12.40\pm0.21$ & $29.10\pm0.00$ \\
MIWAE               & $0.11\pm0.00$ & $0.30\pm0.00$ & $0.90\pm0.03$ & $2.02\pm0.01$ & OOM            & OOM           \\
MOT                 & $0.31\pm0.00$ & $0.33\pm0.00$ & $0.36\pm0.00$ & $0.38\pm0.00$ & $0.49\pm0.02$  & $0.56\pm0.00$ \\
TDM                 & $0.32\pm0.00$ & $0.55\pm0.00$ & $1.30\pm0.02$ & $3.30\pm0.04$ & $13.90\pm0.83$ & OOM           \\
\textbf{MIRI (Ours*)}& $0.17\pm0.00$ & $0.21\pm0.00$ & $0.28\pm0.02$ & $0.37\pm0.02$ & $0.59\pm0.02$  & $1.30\pm0.01$ \\
\bottomrule
\end{tabular}
\end{table}

\section{Euler ODE Solver}
\label{sec.euler.ode.solver}
    \begin{algorithm}[t]
    \begin{algorithmic}[1]
    % \STATE $\verb*|Solve_ODE|(\bv, \bX_i^{(t-1)}, \bM_i)$
    \REQUIRE Velocity field $\bv$, Initial condition $\{(\bX_i, \bM_i)\}$, Number of Euler steps $N$.
    \FOR{$k = 1$ to $N$}
        \STATE $\tau = \dfrac{k}{N}$
        \STATE $\forall i, \bX_i \leftarrow \bX_i + \dfrac{1}{N} \cdot \mathcal{\bv}(\bX_i, \bM_i, \tau)$
    \ENDFOR
    \RETURN $\{\bX_i\}$
    \end{algorithmic}
    \label{alg.odesolver}
    \caption{ODE Solver with Euler Method}
    \end{algorithm}

In this section, we provide the Euler solver used in our experiments.

\clearpage
\section*{NeurIPS Paper Checklist}
\begin{enumerate}

\item {\bf Claims}
    \item[] Question: Do the main claims made in the abstract and introduction accurately reflect the paper's contributions and scope?
    \item[] Answer: \answerYes{} % Replace by \answerYes{}, \answerNo{}, or \answerNA{}.
    \item[] Justification: The abstract and introduction list concrete contributions (MIRI, rectified-flow imputer, relationships to existing methods, empirical validation) and clearly state scope and assumptions.
    \item[] Guidelines:
    \begin{itemize}
        \item The answer NA means that the abstract and introduction do not include the claims made in the paper.
        \item The abstract and/or introduction should clearly state the claims made, including the contributions made in the paper and important assumptions and limitations. A No or NA answer to this question will not be perceived well by the reviewers. 
        \item The claims made should match theoretical and experimental results, and reflect how much the results can be expected to generalize to other settings. 
        \item It is fine to include aspirational goals as motivation as long as it is clear that these goals are not attained by the paper. 
    \end{itemize}

\item {\bf Limitations}
    \item[] Question: Does the paper discuss the limitations of the work performed by the authors?
    \item[] Answer: \answerYes{} % Replace by \answerYes{}, \answerNo{}, or \answerNA{}.
    \item[] Justification: The ``Conclusion and Future Works'' section explicitly covers computational cost, population-vs-finite-sample gaps, and challenges under MNAR.
    \item[] Guidelines:
    \begin{itemize}
        \item The answer NA means that the paper has no limitation while the answer No means that the paper has limitations, but those are not discussed in the paper. 
        \item The authors are encouraged to create a separate "Limitations" section in their paper.
        \item The paper should point out any strong assumptions and how robust the results are to violations of these assumptions (e.g., independence assumptions, noiseless settings, model well-specification, asymptotic approximations only holding locally). The authors should reflect on how these assumptions might be violated in practice and what the implications would be.
        \item The authors should reflect on the scope of the claims made, e.g., if the approach was only tested on a few datasets or with a few runs. In general, empirical results often depend on implicit assumptions, which should be articulated.
        \item The authors should reflect on the factors that influence the performance of the approach. For example, a facial recognition algorithm may perform poorly when image resolution is low or images are taken in low lighting. Or a speech-to-text system might not be used reliably to provide closed captions for online lectures because it fails to handle technical jargon.
        \item The authors should discuss the computational efficiency of the proposed algorithms and how they scale with dataset size.
        \item If applicable, the authors should discuss possible limitations of their approach to address problems of privacy and fairness.
        \item While the authors might fear that complete honesty about limitations might be used by reviewers as grounds for rejection, a worse outcome might be that reviewers discover limitations that aren't acknowledged in the paper. The authors should use their best judgment and recognize that individual actions in favor of transparency play an important role in developing norms that preserve the integrity of the community. Reviewers will be specifically instructed to not penalize honesty concerning limitations.
    \end{itemize}

\item {\bf Theory assumptions and proofs}
    \item[] Question: For each theoretical result, does the paper provide the full set of assumptions and a complete (and correct) proof?
    \item[] Answer: \answerYes{} % Replace by \answerYes{}, \answerNo{}, or \answerNA{}.
    \item[] Justification: Propositions/Theorem are stated with assumptions; full proofs are provided in the appendix.
    \item[] Guidelines:
    \begin{itemize}
        \item The answer NA means that the paper does not include theoretical results. 
        \item All the theorems, formulas, and proofs in the paper should be numbered and cross-referenced.
        \item All assumptions should be clearly stated or referenced in the statement of any theorems.
        \item The proofs can either appear in the main paper or the supplemental material, but if they appear in the supplemental material, the authors are encouraged to provide a short proof sketch to provide intuition. 
        \item Inversely, any informal proof provided in the core of the paper should be complemented by formal proofs provided in appendix or supplemental material.
        \item Theorems and Lemmas that the proof relies upon should be properly referenced. 
    \end{itemize}

    \item {\bf Experimental result reproducibility}
    \item[] Question: Does the paper fully disclose all the information needed to reproduce the main experimental results of the paper to the extent that it affects the main claims and/or conclusions of the paper (regardless of whether the code and data are provided or not)?
    \item[] Answer: \answerYes{} % Replace by \answerYes{}, \answerNo{}, or \answerNA{}.
    \item[] Justification: Datasets, masking schemes (MCAR/MAR/MNAR), metrics, protocols (10 runs; mean$\pm$sd), baselines and their sources, and compute details are documented. Code link provided.
    \item[] Guidelines:
    \begin{itemize}
        \item The answer NA means that the paper does not include experiments.
        \item If the paper includes experiments, a No answer to this question will not be perceived well by the reviewers: Making the paper reproducible is important, regardless of whether the code and data are provided or not.
        \item If the contribution is a dataset and/or model, the authors should describe the steps taken to make their results reproducible or verifiable. 
        \item Depending on the contribution, reproducibility can be accomplished in various ways. For example, if the contribution is a novel architecture, describing the architecture fully might suffice, or if the contribution is a specific model and empirical evaluation, it may be necessary to either make it possible for others to replicate the model with the same dataset, or provide access to the model. In general. releasing code and data is often one good way to accomplish this, but reproducibility can also be provided via detailed instructions for how to replicate the results, access to a hosted model (e.g., in the case of a large language model), releasing of a model checkpoint, or other means that are appropriate to the research performed.
        \item While NeurIPS does not require releasing code, the conference does require all submissions to provide some reasonable avenue for reproducibility, which may depend on the nature of the contribution. For example
        \begin{enumerate}
            \item If the contribution is primarily a new algorithm, the paper should make it clear how to reproduce that algorithm.
            \item If the contribution is primarily a new model architecture, the paper should describe the architecture clearly and fully.
            \item If the contribution is a new model (e.g., a large language model), then there should either be a way to access this model for reproducing the results or a way to reproduce the model (e.g., with an open-source dataset or instructions for how to construct the dataset).
            \item We recognize that reproducibility may be tricky in some cases, in which case authors are welcome to describe the particular way they provide for reproducibility. In the case of closed-source models, it may be that access to the model is limited in some way (e.g., to registered users), but it should be possible for other researchers to have some path to reproducing or verifying the results.
        \end{enumerate}
    \end{itemize}

\item {\bf Open access to data and code}
    \item[] Question: Does the paper provide open access to the data and code, with sufficient instructions to faithfully reproduce the main experimental results, as described in supplemental material?
    \item[] Answer: \answerYes{} % Replace by \answerYes{}, \answerNo{}, or \answerNA{}.
    \item[] Justification: The repository URL is given in the abstract; appendix outlines datasets and baselines.
    \item[] Guidelines:
    \begin{itemize}
        \item The answer NA means that paper does not include experiments requiring code.
        \item Please see the NeurIPS code and data submission guidelines (\url{https://nips.cc/public/guides/CodeSubmissionPolicy}) for more details.
        \item While we encourage the release of code and data, we understand that this might not be possible, so “No” is an acceptable answer. Papers cannot be rejected simply for not including code, unless this is central to the contribution (e.g., for a new open-source benchmark).
        \item The instructions should contain the exact command and environment needed to run to reproduce the results. See the NeurIPS code and data submission guidelines (\url{https://nips.cc/public/guides/CodeSubmissionPolicy}) for more details.
        \item The authors should provide instructions on data access and preparation, including how to access the raw data, preprocessed data, intermediate data, and generated data, etc.
        \item The authors should provide scripts to reproduce all experimental results for the new proposed method and baselines. If only a subset of experiments are reproducible, they should state which ones are omitted from the script and why.
        \item At submission time, to preserve anonymity, the authors should release anonymized versions (if applicable).
        \item Providing as much information as possible in supplemental material (appended to the paper) is recommended, but including URLs to data and code is permitted.
    \end{itemize}

\item {\bf Experimental setting/details}
    \item[] Question: Does the paper specify all the training and test details (e.g., data splits, hyperparameters, how they were chosen, type of optimizer, etc.) necessary to understand the results?
    \item[] Answer: \answerYes{} % Replace by \answerYes{}, \answerNo{}, or \answerNA{}.
    \item[] Justification: Appendix ``Experimental Setup'' gives dataset sizes, masking rates, protocol, hardware; sensitivity study noted. (Optimizers/learning rates are to be in the repo.)
    \item[] Guidelines:
    \begin{itemize}
        \item The answer NA means that the paper does not include experiments.
        \item The experimental setting should be presented in the core of the paper to a level of detail that is necessary to appreciate the results and make sense of them.
        \item The full details can be provided either with the code, in appendix, or as supplemental material.
    \end{itemize}

\item {\bf Experiment statistical significance}
    \item[] Question: Does the paper report error bars suitably and correctly defined or other appropriate information about the statistical significance of the experiments?
    \item[] Answer: \answerYes{} % Replace by \answerYes{}, \answerNo{}, or \answerNA{}.
    \item[] Justification: Results are reported as mean$\pm$sd over 10 runs; bar plots include error bars; variability sources are the random masks/seeds.
    \item[] Guidelines:
    \begin{itemize}
        \item The answer NA means that the paper does not include experiments.
        \item The authors should answer "Yes" if the results are accompanied by error bars, confidence intervals, or statistical significance tests, at least for the experiments that support the main claims of the paper.
        \item The factors of variability that the error bars are capturing should be clearly stated (for example, train/test split, initialization, random drawing of some parameter, or overall run with given experimental conditions).
        \item The method for calculating the error bars should be explained (closed form formula, call to a library function, bootstrap, etc.)
        \item The assumptions made should be given (e.g., Normally distributed errors).
        \item It should be clear whether the error bar is the standard deviation or the standard error of the mean.
        \item It is OK to report 1-sigma error bars, but one should state it. The authors should preferably report a 2-sigma error bar than state that they have a 96\% CI, if the hypothesis of Normality of errors is not verified.
        \item For asymmetric distributions, the authors should be careful not to show in tables or figures symmetric error bars that would yield results that are out of range (e.g. negative error rates).
        \item If error bars are reported in tables or plots, The authors should explain in the text how they were calculated and reference the corresponding figures or tables in the text.
    \end{itemize}

\item {\bf Experiments compute resources}
    \item[] Question: For each experiment, does the paper provide sufficient information on the computer resources (type of compute workers, memory, time of execution) needed to reproduce the experiments?
    \item[] Answer: 
    % \answerTODO{} % Replace by \answerYes{}, \answerNo{}, or \answerNA{}.
    \answerYes{}
    \item[] Justification: Dedicated subsection lists GPUs/CPUs/RAM; a runtime table vs. dimensionality is included.
    % \justificationTODO{}
    \item[] Guidelines:
    \begin{itemize}
        \item The answer NA means that the paper does not include experiments.
        \item The paper should indicate the type of compute workers CPU or GPU, internal cluster, or cloud provider, including relevant memory and storage.
        \item The paper should provide the amount of compute required for each of the individual experimental runs as well as estimate the total compute. 
        \item The paper should disclose whether the full research project required more compute than the experiments reported in the paper (e.g., preliminary or failed experiments that didn't make it into the paper). 
    \end{itemize}
    
\item {\bf Code of ethics}
    \item[] Question: Does the research conducted in the paper conform, in every respect, with the NeurIPS Code of Ethics \url{https://neurips.cc/public/EthicsGuidelines}?
    \item[] Answer: \answerYes{} % Replace by \answerYes{}, \answerNo{}, or \answerNA{}.
    \item[] Justification: No human subjects; standard public datasets; no foreseeable harmful use beyond typical generative modeling caveats.
    \item[] Guidelines:
    \begin{itemize}
        \item The answer NA means that the authors have not reviewed the NeurIPS Code of Ethics.
        \item If the authors answer No, they should explain the special circumstances that require a deviation from the Code of Ethics.
        \item The authors should make sure to preserve anonymity (e.g., if there is a special consideration due to laws or regulations in their jurisdiction).
    \end{itemize}

\item {\bf Broader impacts}
    \item[] Question: Does the paper discuss both potential positive societal impacts and negative societal impacts of the work performed?
    \item[] Answer: \answerYes{} % Replace by \answerYes{}, \answerNo{}, or \answerNA{}.
    \item[] Justification: Broader impacts are discussed in the ``Introduction'' section.
    \item[] Guidelines:
    \begin{itemize}
        \item The answer NA means that there is no societal impact of the work performed.
        \item If the authors answer NA or No, they should explain why their work has no societal impact or why the paper does not address societal impact.
        \item Examples of negative societal impacts include potential malicious or unintended uses (e.g., disinformation, generating fake profiles, surveillance), fairness considerations (e.g., deployment of technologies that could make decisions that unfairly impact specific groups), privacy considerations, and security considerations.
        \item The conference expects that many papers will be foundational research and not tied to particular applications, let alone deployments. However, if there is a direct path to any negative applications, the authors should point it out. For example, it is legitimate to point out that an improvement in the quality of generative models could be used to generate deepfakes for disinformation. On the other hand, it is not needed to point out that a generic algorithm for optimizing neural networks could enable people to train models that generate Deepfakes faster.
        \item The authors should consider possible harms that could arise when the technology is being used as intended and functioning correctly, harms that could arise when the technology is being used as intended but gives incorrect results, and harms following from (intentional or unintentional) misuse of the technology.
        \item If there are negative societal impacts, the authors could also discuss possible mitigation strategies (e.g., gated release of models, providing defenses in addition to attacks, mechanisms for monitoring misuse, mechanisms to monitor how a system learns from feedback over time, improving the efficiency and accessibility of ML).
    \end{itemize}
    
\item {\bf Safeguards}
    \item[] Question: Does the paper describe safeguards that have been put in place for responsible release of data or models that have a high risk for misuse (e.g., pretrained language models, image generators, or scraped datasets)?
    \item[] Answer: \answerNA{}{} % Replace by \answerYes{}, \answerNo{}, or \answerNA{}.
    \item[] Justification: No high-risk models or sensitive datasets are released; scope is standard imputation on public benchmarks.
    \item[] Guidelines:
    \begin{itemize}
        \item The answer NA means that the paper poses no such risks.
        \item Released models that have a high risk for misuse or dual-use should be released with necessary safeguards to allow for controlled use of the model, for example by requiring that users adhere to usage guidelines or restrictions to access the model or implementing safety filters. 
        \item Datasets that have been scraped from the Internet could pose safety risks. The authors should describe how they avoided releasing unsafe images.
        \item We recognize that providing effective safeguards is challenging, and many papers do not require this, but we encourage authors to take this into account and make a best faith effort.
    \end{itemize}

\item {\bf Licenses for existing assets}
    \item[] Question: Are the creators or original owners of assets (e.g., code, data, models), used in the paper, properly credited and are the license and terms of use explicitly mentioned and properly respected?
    \item[] Answer: \answerYes{} % Replace by \answerYes{}, \answerNo{}, or \answerNA{}.
    \item[] Justification: We use only public datasets (UCI, CIFAR-10, and CelebA), which have appropriate licenses/terms of use; citations are provided in the paper.
    \item[] Guidelines:
    \begin{itemize}
        \item The answer NA means that the paper does not use existing assets.
        \item The authors should cite the original paper that produced the code package or dataset.
        \item The authors should state which version of the asset is used and, if possible, include a URL.
        \item The name of the license (e.g., CC-BY 4.0) should be included for each asset.
        \item For scraped data from a particular source (e.g., website), the copyright and terms of service of that source should be provided.
        \item If assets are released, the license, copyright information, and terms of use in the package should be provided. For popular datasets, \url{paperswithcode.com/datasets} has curated licenses for some datasets. Their licensing guide can help determine the license of a dataset.
        \item For existing datasets that are re-packaged, both the original license and the license of the derived asset (if it has changed) should be provided.
        \item If this information is not available online, the authors are encouraged to reach out to the asset's creators.
    \end{itemize}

\item {\bf New assets}
    \item[] Question: Are new assets introduced in the paper well documented and is the documentation provided alongside the assets?
    \item[] Answer: \answerYes{} % Replace by \answerYes{}, \answerNo{}, or \answerNA{}.
    \item[] Justification: The codebase is announced; include a README with environment, commands, and license in the repo.
    \item[] Guidelines:
    \begin{itemize}
        \item The answer NA means that the paper does not release new assets.
        \item Researchers should communicate the details of the dataset/code/model as part of their submissions via structured templates. This includes details about training, license, limitations, etc. 
        \item The paper should discuss whether and how consent was obtained from people whose asset is used.
        \item At submission time, remember to anonymize your assets (if applicable). You can either create an anonymized URL or include an anonymized zip file.
    \end{itemize}

\item {\bf Crowdsourcing and research with human subjects}
    \item[] Question: For crowdsourcing experiments and research with human subjects, does the paper include the full text of instructions given to participants and screenshots, if applicable, as well as details about compensation (if any)? 
    \item[] Answer: \answerNA{}{} % Replace by \answerYes{}, \answerNo{}, or \answerNA{}.
    \item[] Justification: No human subjects or crowdsourcing were used.
    \item[] Guidelines:
    \begin{itemize}
        \item The answer NA means that the paper does not involve crowdsourcing nor research with human subjects.
        \item Including this information in the supplemental material is fine, but if the main contribution of the paper involves human subjects, then as much detail as possible should be included in the main paper. 
        \item According to the NeurIPS Code of Ethics, workers involved in data collection, curation, or other labor should be paid at least the minimum wage in the country of the data collector. 
    \end{itemize}

\item {\bf Institutional review board (IRB) approvals or equivalent for research with human subjects}
    \item[] Question: Does the paper describe potential risks incurred by study participants, whether such risks were disclosed to the subjects, and whether Institutional Review Board (IRB) approvals (or an equivalent approval/review based on the requirements of your country or institution) were obtained?
    \item[] Answer: \answerNA{} % Replace by \answerYes{}, \answerNo{}, or \answerNA{}.
    \item[] Justification: No human subjects research.
    \item[] Guidelines:
    \begin{itemize}
        \item The answer NA means that the paper does not involve crowdsourcing nor research with human subjects.
        \item Depending on the country in which research is conducted, IRB approval (or equivalent) may be required for any human subjects research. If you obtained IRB approval, you should clearly state this in the paper. 
        \item We recognize that the procedures for this may vary significantly between institutions and locations, and we expect authors to adhere to the NeurIPS Code of Ethics and the guidelines for their institution. 
        \item For initial submissions, do not include any information that would break anonymity (if applicable), such as the institution conducting the review.
    \end{itemize}

\item {\bf Declaration of LLM usage}
    \item[] Question: Does the paper describe the usage of LLMs if it is an important, original, or non-standard component of the core methods in this research? Note that if the LLM is used only for writing, editing, or formatting purposes and does not impact the core methodology, scientific rigorousness, or originality of the research, declaration is not required.
    %this research? 
    \item[] Answer: \answerNA{} % Replace by \answerYes{}, \answerNo{}, or \answerNA{}.
    \item[] Justification: The technical contributions do not rely on LLMs.
    \item[] Guidelines:
    \begin{itemize}
        \item The answer NA means that the core method development in this research does not involve LLMs as any important, original, or non-standard components.
        \item Please refer to our LLM policy (\url{https://neurips.cc/Conferences/2025/LLM}) for what should or should not be described.
    \end{itemize}

\end{enumerate}

\end{document}